\documentclass[12pt]{article}
\usepackage[margin=1in]{geometry}
\usepackage{amsmath,amsfonts,amssymb,amsthm}
\usepackage{graphicx,float}
\usepackage{natbib}
\usepackage{algorithm,algorithmic}
\usepackage{bm}
\usepackage{booktabs}
\usepackage{hyperref}
\usepackage{setspace}
\usepackage{tikz}
\usetikzlibrary{positioning,arrows.meta,calc}

\newif\ifanonymous
\anonymousfalse  

\newtheorem{theorem}{Theorem}
\newtheorem{proposition}[theorem]{Proposition}
\newtheorem{definition}[theorem]{Definition}
\newtheorem{remark}[theorem]{Remark}

\newcommand{\R}{\mathbb{R}}
\newcommand{\E}{\mathbb{E}}
\newcommand{\calX}{\mathcal{X}}

\graphicspath{{fig/}}

\title{Generative Bayesian Computation as a Scalable Alternative\\
       to Gaussian Process Surrogates}

\ifanonymous
  \author{Anonymous}
  \date{}
\else
  \author{
    Nick Polson\footnote{Booth School of Business, University of Chicago. \texttt{ngp@chicagobooth.edu}.} \and Vadim Sokolov\footnote{Department of Systems Engineering and Operations Research, George Mason University. \texttt{vsokolov@gmu.edu}.} }
  \date{February 2026}
\fi

\begin{document}

\maketitle

\begin{abstract}
\noindent
Gaussian process (GP) surrogates are the default tool for emulating expensive computer experiments, but cubic cost, stationarity assumptions, and Gaussian predictive distributions limit their reach. We propose Generative Bayesian Computation (GBC) via Implicit Quantile Networks (IQNs) as a surrogate framework that targets all three limitations. GBC learns the full conditional quantile function from input--output pairs; at test time, a single forward pass per quantile level produces draws from the predictive distribution.

Across fourteen benchmarks we compare GBC to four GP-based methods. GBC improves CRPS by 11--26\% on piecewise jump-process benchmarks, by 14\% on a ten-dimensional Friedman function, and scales linearly to 90,000 training points where dense-covariance GPs are infeasible. A boundary-augmented variant matches or outperforms Modular Jump GPs on two-dimensional jump datasets (up to 46\% CRPS improvement). In active learning, a randomized-prior IQN ensemble achieves nearly three times lower RMSE than deep GP active learning on Rocket LGBB. Overall, GBC records a favorable point estimate in 12 of 14 comparisons. GPs retain an edge on smooth surfaces where their smoothness prior provides effective regularization.
\end{abstract}

\noindent\textbf{Keywords:} Implicit quantile network, Uncertainty quantification, Computer experiments, Non-stationarity, Active learning.

\newpage

\section{Introduction}\label{sec:intro}

A surrogate model --- also called an emulator or meta-model --- approximates an expensive computer simulation $y = f(x)$, $x \in \R^d$, from a modest training campaign of input--output pairs \citep{sacks1989design,santner2018design,gramacy2020surrogates}. Such emulators are central to calibration \citep{kennedy2001bayesian,schultz2022bayesian}, sensitivity analysis \citep{oakley2004probabilistic}, Bayesian optimization \citep{jones1998efficient,schultz2018bayesian} --- for which deep reinforcement learning offers a model-free alternative \citep{schultz2018deep} --- sequential design \citep{morris1995exploratory,gramacy2009adaptive}, and active learning \citep{mackay1992information,cohn1994neural}. Large-scale agent-based simulators \citep{sokolov2012flexible} exemplify the setting: a single evaluation can take minutes to hours, making GP-based Bayesian optimization or calibration the dominant approach. Gaussian processes (GPs) have become the default surrogate choice because they furnish analytic predictive means and variances and support coherent Bayesian uncertainty quantification \citep{rasmussen2006gaussian,gramacy2020surrogates}.

The GP framework continues to be extended in important directions. \citet{sauer2023active} combine deep GPs with elliptical slice sampling and active learning via the ALC criterion to handle non-stationary response surfaces in small-to-medium simulation campaigns, building on the deep GP formulation of \citet{damianou2013deep}; \citet{schultz2022deep} extend deep GPs to heteroskedastic, high-dimensional outputs. \citet{holthuijzen2025synthesizing} apply bias-corrected GP surrogates with Vecchia approximations \citep{katzfuss2021general} to large ecological datasets ($n \approx 8.4 \times 10^6$), demonstrating that elaborate multi-stage pipelines are needed to scale GPs beyond $n \approx 10{,}000$. For response surfaces with sudden discontinuities, \citet{flowers2026modular} propose Modular Jump GPs (MJGP) that EM-cluster the marginal responses to identify regimes, train a classifier to estimate regime probabilities, and fit a single GP on augmented inputs. \citet{park2026active} develop active learning criteria for piecewise Jump GP surrogates, demonstrating that bias near regime boundaries must enter the acquisition criterion. Local GP methods --- including the LAGP approach of \citet{gramacyapley2015local} and its large-scale extension by \citet{cole2022large} --- address the cubic cost barrier by fitting separate GP neighborhoods, while \citet{cooper2026modernizing} extend fully Bayesian GP classification to large $n$ via Vecchia approximation combined with ESS.

Each advance addresses a genuine limitation, but each solution is problem-specific: deep GPs require MCMC; Vecchia approximations impose a conditional independence graph; jump modifications need cluster assignment followed by local GP inference; and active learning criteria must be re-derived for each new surrogate class. A practitioner facing a new computer experiment must diagnose which limitation applies and select the corresponding specialist extension --- or else accept the limitations of a standard stationary GP.

Quantile neural network surrogates offer a complementary approach \citep{polson2020deep}; deep learning methods for dimensionality reduction \citep{polson2021deep} further enable scaling to high-dimensional inputs. Quantile regression \citep{koenker2005quantile,gasthaus2019probabilistic} provides a direct route to conditional distributions, and recent work in simulation-based inference (SBI) has shown that neural networks trained on simulated data can perform amortized posterior inference \citep{cranmer2020frontier,papamakarios2021normalizing,lueckmann2021benchmarking}. The Implicit Quantile Network \citep{dabney2018implicit} embeds the quantile level as a continuous input, enabling a single network to represent the entire conditional quantile function --- a property we exploit for surrogate modeling.

GBC was introduced by \citet{polson2025generative} for amortized Bayesian inference. Here we develop GBC with IQNs as a surrogate framework that targets all three GP limitations simultaneously. Training scales as $\mathcal{O}(N)$ via stochastic gradient descent; the IQN delivers the full conditional quantile function (not just mean and variance); and the same architecture handles non-stationarity, heteroskedasticity, and jump discontinuities without custom kernel or likelihood specifications --- adaptation requires only loss-weight selection and, for jump boundaries, an optional input augmentation (Section~\ref{sec:gbc-aug}). Once trained, predictive samples cost $\mathcal{O}(1)$ per new input versus $\mathcal{O}(n)$ per GP predictive mean or $\mathcal{O}(n^2)$ per predictive variance, in addition to the $\mathcal{O}(n^3)$ GP training cost. We compare GBC to four GP-based methods --- \texttt{hetGP} \citep{binois2018practical}, stationary GP, MJGP \citep{flowers2026modular}, and DGP+ALC \citep{sauer2023active} --- across fourteen benchmarks spanning $d = 1$ to $d = 10$ and $n = 133$ to $n = 90{,}000$. The benchmarks are designed so that each GP method is tested under conditions favorable to it: \texttt{hetGP} on heteroskedastic data, MJGP on jump boundaries, DGP+ALC on active learning. The pattern that emerges is that GBC's advantage grows with problem dimension, training-set size, and boundary sharpness, while GPs retain a clear edge on smooth, moderate-$n$ surfaces where their kernel prior provides effective regularization. Table~\ref{tab:summary} consolidates the fourteen head-to-head comparisons.

GBC does have clear limitations. It requires a generative forward model from which joint parameter--data samples $(\theta, y)$ can be drawn; when only a small fixed dataset is available, augmentation is needed. On smooth, stationary problems at small sample sizes, a well-tuned GP with analytic posteriors can match or exceed GBC accuracy. On heteroskedastic data, purpose-built models such as \texttt{hetGP} achieve better predictive calibration (coverage) when the Gaussian heteroskedastic assumption holds. The advantage of GBC grows with dimensionality, scale ($n$), and the sharpness of any regime boundary --- precisely the settings where standard GP surrogates struggle most.

Our contributions are as follows. First, we formulate GBC with IQNs as a unified surrogate framework (Definition~\ref{def:gbc}) that addresses the three main limitations of GP surrogates --- cubic scaling, stationarity, and Gaussian predictive distributions --- within a single architecture. The theoretical foundation rests on the noise outsourcing theorem (Theorem~\ref{thm:noise-outsourcing}), which guarantees that any conditional distribution can be represented as a deterministic function of the conditioning variable and a uniform random variable; we prove that the IQN architecture is a universal approximator of this map (Proposition~\ref{prop:uat}) and establish CRPS convergence rates (Propositions~\ref{prop:crps}--\ref{prop:consistency}). Second, we introduce a three-term training loss that combines an $L_1$ location anchor, a quantile-ordering surrogate, and the quantile loss, with weights that can be adapted to the problem structure: a single configuration change (no architectural modification) yields a 28\% CRPS improvement on jump boundaries. Third, we develop GBC-Aug, a boundary-augmented variant that couples EM-based regime detection with the IQN backend, matching or surpassing Modular Jump GPs on their own benchmarks while training 26$\times$ faster. Fourth, we propose a randomized-prior IQN ensemble for active learning that provides reliable acquisition signals where standard NN uncertainty methods (bootstrap, SNGP, anchored ensembles, last-layer Laplace) fail. Fifth, we provide a systematic empirical comparison across fourteen benchmarks and four GP-based competitors, with 10--50 Monte Carlo replicates per benchmark, establishing when GBC outperforms GPs and when it does not.

Section~\ref{sec:background} reviews GP surrogates and their structural limitations. Section~\ref{sec:gbc} introduces GBC: the noise outsourcing theorem, IQN architecture, and training loss. Section~\ref{sec:extensions} develops boundary augmentation for jump processes and active learning within the GBC framework. Section~\ref{sec:benchmarks} describes the implementation and presents synthetic benchmarks. Section~\ref{sec:real} evaluates GBC on real-data and applied examples. Section~\ref{sec:discussion} discusses the results and open problems.

\section{Gaussian Process Surrogates}\label{sec:background}

This section reviews GP surrogates and their limitations, setting up the three challenges --- cubic scaling, stationarity, and Gaussian predictive distributions --- that GBC addresses.

\subsection{GP regression}

Consider a black-box simulator $y = f(x) + \epsilon$, $x \in \calX \subseteq \R^d$, $\epsilon \sim \mathcal{N}(0, \sigma^2)$. A GP surrogate places a GP prior on $f$: for any finite collection $X_n = (x_1, \ldots, x_n)^\top$,
\[
  Y_n \mid X_n \sim \mathcal{N}_n(\mu, \Sigma(X_n)), \quad \Sigma_{ij}(X_n) = k(x_i, x_j) + g\,\mathbb{I}_{\{i=j\}},
\]
where $k(\cdot,\cdot)$ is a covariance (kernel) function, typically a function of the pairwise distance $\|x_i - x_j\|$, and $g > 0$ is a nugget parameter \citep{rasmussen2006gaussian}. A common choice is the squared-exponential kernel $k(x_i,x_j) = \sigma_f^2 \exp(-\|x_i - x_j\|^2/\ell)$; the GP baselines in Sections~\ref{sec:benchmarks}--\ref{sec:real} use a Mat\'{e}rn-$5/2$ kernel with separable (per-dimension) length scales, which is less smooth than the SE kernel and standard in computer experiments \citep{gramacy2020surrogates}.  Posterior prediction at test locations $X_*$ has closed form (kriging equations), with predictive mean $\mu_Y(X_*)$ and covariance $\Sigma_Y(X_*)$; see \citet{gramacy2020surrogates} for a detailed exposition. Inference for $(\sigma_f^2, \ell, g)$ requires $\mathcal{O}(n^3)$ flops per likelihood evaluation.

\subsection{Limitations and extensions}

Three structural limitations constrain the applicability of GP surrogates.  The most immediate is computational: dense covariance matrices scale as $\mathcal{O}(n^2)$ in storage and $\mathcal{O}(n^3)$ in Cholesky decomposition, limiting plain GPs to $n \ll 10{,}000$ \citep{cooper2026modernizing}. \citet{holthuijzen2025synthesizing} require Vecchia approximations \citep{katzfuss2021general} to handle $n \approx 8.4 \times 10^6$, while local GP methods \citep{gramacyapley2015local,cole2022large} mitigate the cost by fitting separate neighborhoods. Sparse variational GPs \citep{titsias2009variational,hensman2013gaussian} reduce cost to $\mathcal{O}(nm^2)$ via $m \ll n$ inducing points, at the cost of an approximate posterior. All of these approaches introduce structural assumptions (conditional independence graphs, neighborhood sizes, or inducing-point placement) that add tuning burden.

The second limitation is stationarity. The kernel $\Sigma(x_i, x_j)$ depends only on $\|x_i - x_j\|$, encoding the assumption that correlation structure is spatially uniform. Non-stationary dynamics --- regime changes, heteroskedasticity, discontinuities --- require bespoke solutions: DGP warping \citep{sauer2023active,damianou2013deep}, tree-based partitions \citep{gramacy2008bayesian}, or explicit jump modeling \citep{flowers2026modular, park2026active}. Each modification introduces additional inference complexity.

The third limitation is distributional. Standard GP prediction delivers a Gaussian $\mathcal{N}(\mu_Y, \Sigma_Y)$. Downstream tasks requiring quantile estimates, tail probabilities, or samples from a non-Gaussian posterior receive only a Gaussian approximation. Extending GPs to classification requires a latent layer and MCMC or variational integration \citep{cooper2026modernizing}, specifically because the Bernoulli likelihood is not conjugate to the GP prior.

Figure~\ref{fig:jump} provides a simple illustration.  On a 1D function with a jump discontinuity, the stationary GP must choose a single length scale that trades off smoothness within each regime against accuracy near the boundary.  The IQN-based GBC, by contrast, adapts its conditional quantile bands locally --- widening them at the jump and tightening them away from it.  The full piecewise benchmarks in Sections~\ref{sec:bgp}--\ref{sec:flowers} demonstrate this advantage in noisy, higher-dimensional settings.

\begin{figure}[H]
  \centering
  \includegraphics[width=\textwidth]{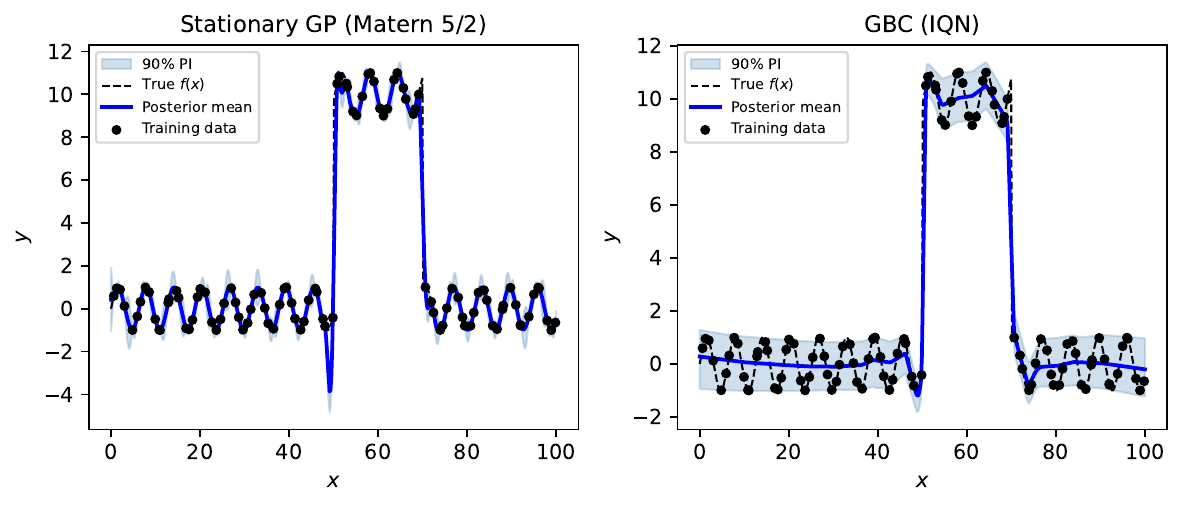}
  \caption{GP vs.\ GBC on a 1D jump function ($n = 100$, noiseless).
    The stationary GP (left) adapts its length scale near the discontinuity but
    blurs the jump; GBC (right) widens its quantile bands at the boundary.
    Details on the IQN architecture are in Section~\ref{sec:gbc}.}
  \label{fig:jump}
\end{figure}

These extensions solve individual limitations but fragment the toolkit.  A practitioner using Vecchia for scalability cannot easily combine it with jump detection for discontinuities; a deep GP designed for non-stationarity does not automatically produce calibrated quantile predictions.  GBC targets all three limitations within a single architecture, as the next section describes.

\section{Generative Bayesian Computation}\label{sec:gbc}

GBC replaces the GP with a neural network that maps inputs and a uniform random variable to predictive samples. We first present the theoretical foundation (noise outsourcing), then the IQN architecture and training loss, and conclude with approximation guarantees.

\subsection{Noise outsourcing}

The key insight behind GBC is that \emph{any} posterior distribution can be represented as a deterministic function of the data and a single uniform random variable --- no likelihood evaluation required. This follows from a classical result in probability theory.

Let $\theta \in \Theta \subseteq \R^k$ denote unknown parameters (or any quantities of interest), $y \in \mathcal{Y} \subseteq \R^p$ observed data, and $\tau \in [0,1]^k$ a uniform base variable. The goal is to draw samples from the posterior $\pi(\theta \mid y)$, whether or not a likelihood $p(y \mid \theta)$ is available.

\begin{theorem}[Noise Outsourcing; \citealp{kallenberg1997foundations}, Thm.~5.10]
\label{thm:noise-outsourcing}
If $(X, Z)$ are random variables in a Borel space $\mathcal{X} \times \mathcal{Z}$, there exist a random variable $U \perp\!\!\!\perp X$ and a measurable function $G^\star : [0,1] \times \mathcal{X} \to \mathcal{Z}$ such that
\[
  (X, Z) \stackrel{a.s.}{=} \bigl(X,\, G^\star(U, X)\bigr).
\]
\end{theorem}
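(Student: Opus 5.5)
The plan is to reduce the statement to the case $\mathcal{Z} = [0,1]$ via the Borel isomorphism theorem, construct $G^\star$ there by the conditional quantile transform, and then augment the probability space with an independent uniform so that the resulting $U$ reproduces $Z$ almost surely, not merely in law.

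\textbf{Reduction.} Since $\mathcal{Z}$ is a Borel space, there is a measurable bijection $h:\mathcal{Z}\to B$ onto a Borel set $B\subseteq[0,1]$, and its inverse is measurable. It therefore suffices to prove the claim for the real-valued variable $\tilde Z = h(Z)$ and then set $G^\star = h^{-1}\circ \tilde G$. To keep the composition well defined, I will redefine $\tilde G$ on a null set so that it takes values in $B$.

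\textbf{Construction of the map.} Let $\mu(x,\cdot)$ be a regular conditional distribution of $\tilde Z$ given $X=x$. It exists because $[0,1]$ is Polish, and it is jointly measurable as a kernel. Put $F(x,t)=\mu(x,[0,t])$ and define the conditional quantile function
\[
  \tilde G(u,x)=\inf\{t\in[0,1]: F(x,t)\ge u\}.
\]
Measurability in $(u,x)$ follows from the identity $\{\tilde G(u,x)\le t\}=\{u\le F(x,t)\}$ together with right-continuity in $t$, which lets us restrict to rational $t$. For $U$ uniform and independent of $X$, the standard quantile-transform fact gives $\tilde G(U,x)\sim\mu(x,\cdot)$ for each fixed $x$. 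Hence $(X,\tilde G(U,X))\stackrel{d}{=}(X,\tilde Z)$.

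\textbf{Upgrading to almost-sure equality.} This is the step I expect to be the main obstacle, since an arbitrary independent $U$ only yields equality in distribution. I will construct $U$ from $(X,\tilde Z)$ and an auxiliary uniform $V$ independent of $(X,Z)$, obtained if necessary by enlarging the probability space. This is the sense in which the theorem's ``there exists $U$'' is to be read. Define the randomized probability integral transform
\[
  U = F(X,\tilde Z^-) + V\bigl(F(X,\tilde Z)-F(X,\tilde Z^-)\bigr).
\]
Conditionally on $X=x$, this $U$ is uniform on $[0,1]$. The reason is that $V$ spreads the mass of each atom of $\mu(x,\cdot)$ uniformly over the corresponding jump of $F(x,\cdot)$. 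Since the conditional law does not depend on $x$, $U$ is independent of $X$. Finally, $\tilde G(U,X)=\tilde Z$ almost surely: $U$ lies in $[F(X,\tilde Z^-),F(X,\tilde Z)]$, and the generalized inverse returns $\tilde Z$ except on the event that $\tilde Z$ falls in a flat stretch of $F(X,\cdot)$ or $U$ hits an endpoint. Both events have conditional probability zero, which I will check by integrating against $\mu(x,\cdot)$ and then over the law of $X$. Composing with $h^{-1}$ completes the proof.
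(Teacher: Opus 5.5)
Your proof is correct. It reaches the almost-sure statement by a different route from the one the paper relies on, since the paper does not prove the result in the text but cites Kallenberg's Theorem~5.10.

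The first half of your argument matches Kallenberg's: a regular conditional distribution plus the randomization lemma, which is itself proved by Borel isomorphism and the conditional quantile function. The difference is in the upgrade from equality in law to almost-sure equality, which Kallenberg does abstractly. With $\vartheta$ uniform and independent of $X$, one has $(X,Z)\stackrel{d}{=}(X,G^\star(\vartheta,X))$. A second application of the transfer theorem then yields $\tilde\vartheta$ with $(X,Z,\tilde\vartheta)\stackrel{d}{=}(X,G^\star(\vartheta,X),\vartheta)$. Finally, measurability of the graph $\{(x,z,u): z=G^\star(u,x)\}$ in the Borel space gives $Z=G^\star(\tilde\vartheta,X)$ a.s., with $\tilde\vartheta\perp\!\!\!\perp X$ uniform.

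That route never touches atoms, flat stretches, or the order structure of $[0,1]$ in the upgrade step. It does, however, hide the construction of $\tilde\vartheta$ inside the transfer theorem, which itself needs an independent uniform on the underlying space. So the enlargement caveat you flag is needed there too, and it is genuinely unavoidable for the statement as printed.

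Your randomized probability integral transform buys two things by making $U$ explicit. First, it shows that the auxiliary $V$ matters only on atoms of $\mu(x,\cdot)$. When the conditional laws are a.s.\ atomless, $U=F(X,\tilde Z)$ is a function of $(X,Z)$ and no enlargement is required. Second, it identifies $U$ as the conditional PIT, i.e.\ the level $\tau$ at which $Q_\tau(\,\cdot\mid x)$ returns the observed response. That is exactly how the paper interprets $G^\star$.

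One phrase needs fixing when you write this out: ``$U$ hits an endpoint'' is not a null event. At a non-atom $z$ the interval $[F(x,z^-),F(x,z)]$ is degenerate, so $U$ lies on its endpoint with probability one. The right endpoint is harmless in any case, because $\tilde G(F(x,z),x)=z$ unless $z$ has a flat stretch immediately to its left. The correct exceptional set is therefore $\{V=0\}$ at atoms together with $N_x=\{z:\mu(x,(z-\delta,z])=0\text{ for some }\delta>0\}$. To see $\mu(x,N_x)=0$, note that $N_x$ is covered by the countably many sets $S_q=\{z>q:\mu(x,(q,z])=0\}$ with $q$ rational. Each $S_q$ is $\mu(x,\cdot)$-null by continuity from below.
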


In words: the dependence of $Z$ on $X$ can be captured entirely by a deterministic function that takes $X$ and an independent uniform draw as inputs. The theorem applies to any pair of jointly distributed random variables. In the Bayesian inference setting, take $X = y$ (data) and $Z = \theta$ (parameters): the generator $G^\star(\tau, y)$ is a measurable transport map from the uniform base to the posterior $\pi(\theta \mid y)$.  When $\theta$ is scalar ($k{=}1$), this reduces to the inverse CDF $F^{-1}_{\theta \mid y}(\tau)$; for $k > 1$, $G^\star$ is a general measurable function, not a component-wise quantile inversion. In the surrogate emulation setting --- the primary focus of this paper --- take $X = x$ (inputs) and $Z = y$ (scalar response): the generator $G^\star(\tau, x) = Q_\tau(Y \mid x)$ is the conditional quantile function of $Y$ given $x$.  In both cases, the IQN learns an approximation $G_\phi \approx G^\star$ from training pairs. Throughout Sections~\ref{sec:benchmarks}--\ref{sec:real}, $x$ denotes the simulator input and $y$ the response; the $(x, y)$ surrogate notation is used consistently. Further details on the proof and convergence theory appear in Supplement~A. When the conditioning variable is high-dimensional ($y \in \R^p$ with large $p$), a sufficient summary statistic $S(y)$ can replace $y$ as input; Supplement~B discusses this dimension reduction.

\subsection{Implicit Quantile Network}

We follow \citet{dabney2018implicit} in embedding the quantile level $\tau$ via a cosine transform before merging with the covariate representation:
\[
  \phi(\tau) = \bigl[\cos(j\pi\tau)\bigr]_{j=0}^{n_h-1} \in \R^{n_h},
\]
where $n_h = 32$ is the embedding dimension. The IQN computes
\[
  G_\phi(\tau, x) = f_{\rm out}\!\left(f_1\!\left(f_x(x) \odot f_\tau(\phi(\tau))\right)\right),
\]
where $f_x, f_\tau, f_1$ are fully-connected layers with ReLU activations, $\odot$ denotes elementwise product, and $f_{\rm out}$ outputs two values: a location estimate $\hat\mu$ (used only during training as an $L_1$ anchor targeting the conditional median) and a quantile estimate $\hat q_\tau$ (the predictive output at test time).

The architecture works as follows. The input $x$ passes through a feature network $f_x$ that produces a representation vector; independently, the quantile level $\tau$ is embedded via $\phi(\tau)$ and projected through $f_\tau$. The elementwise product $f_x(x) \odot f_\tau(\phi(\tau))$ allows the quantile level to modulate which features of $x$ are emphasized, enabling the network to represent different quantile surfaces as smooth functions of $\tau$. Because $\tau$ enters as a continuous input rather than a discrete index, the IQN can evaluate any quantile in $[0,1]$ at test time without retraining.

Figure~\ref{fig:iqn-arch} provides a diagram of the IQN architecture.  The left branch processes the input $x$ through a feature network $f_x$; the right branch embeds $\tau$ via the cosine transform and projects through $f_\tau$. Their elementwise product merges covariate information with quantile-level modulation, and a final layer produces both a location estimate $\hat\mu$ and the quantile prediction $\hat q_\tau$.

\begin{figure}[ht!]
\centering
\begin{tikzpicture}[
    node distance=0.9cm and 1.6cm,
    box/.style={draw, rounded corners=2pt, minimum width=1.8cm,
                minimum height=0.6cm, font=\small, align=center},
    io/.style={draw, rounded corners=2pt, minimum width=1.4cm,
               minimum height=0.6cm, font=\small, fill=gray!12},
    arr/.style={-{Stealth[length=5pt]}, thick},
  ]
  \node[io] (x) {$x \in \R^d$};
  \node[io, right=3.2cm of x] (tau) {$\tau \sim U[0,1]$};

  \node[box, below=of x] (fx) {$f_x$: FC+ReLU};
  \node[box, below=of tau] (cos) {$\phi(\tau)$: cosine};
  \node[box, below=of cos] (ftau) {$f_\tau$: FC+ReLU};

  \node[box, below=2.0cm of $(fx)!0.5!(ftau)$, minimum width=2.4cm,
        fill=blue!8] (merge) {$\odot$ (elementwise)};

  \node[box, below=of merge, minimum width=2.4cm] (f1) {$f_1$: FC+ReLU};

  \node[box, below left=0.9cm and 0.3cm of f1, fill=green!10] (mu) {$\hat\mu$};
  \node[box, below right=0.9cm and 0.3cm of f1, fill=orange!12] (q) {$\hat q_\tau$};

  \draw[arr] (x) -- (fx);
  \draw[arr] (tau) -- (cos);
  \draw[arr] (cos) -- (ftau);
  \draw[arr] (fx) -- (merge);
  \draw[arr] (ftau) -- (merge);
  \draw[arr] (merge) -- (f1);
  \draw[arr] (f1) -- (mu);
  \draw[arr] (f1) -- (q);
\end{tikzpicture}
\caption{IQN architecture.  The input $x$ and quantile level $\tau$ are
  processed through separate branches ($f_x$ and $f_\tau \circ \phi$),
  merged via elementwise product, and decoded into a location estimate
  $\hat\mu$ and a quantile prediction $\hat q_\tau$.  The cosine embedding
  $\phi(\tau)$ enables smooth interpolation over quantile levels.}
\label{fig:iqn-arch}
\end{figure}
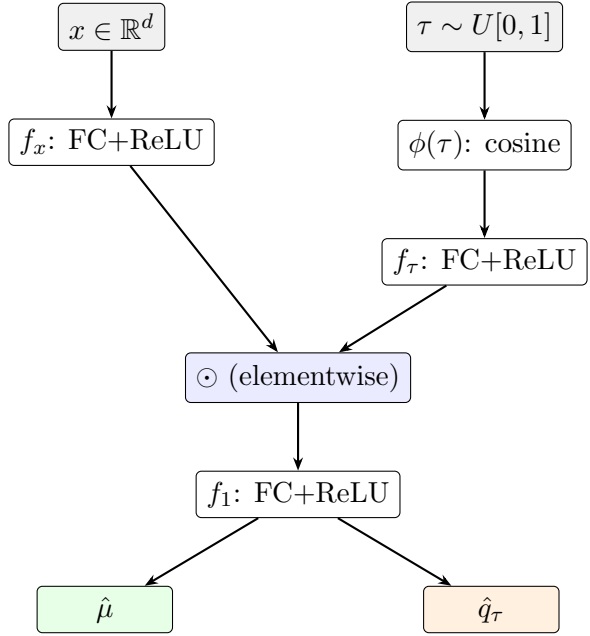

\subsection{Training loss}
\label{sec:loss}

The IQN is trained with a three-term loss that addresses three potential failure modes: mode collapse, quantile mis-ordering, and miscalibration. Let $\mathcal{D}_N = \{(x^{(i)}, y^{(i)})\}_{i=1}^N$ be training data from the forward model (in the benchmarks of Sections~\ref{sec:benchmarks}--\ref{sec:real}, $N = n$, the same training set used by the GP). For each mini-batch and draw $\tau \sim \mathrm{Uniform}[0,1]$, the loss is
\begin{equation}\label{eq:loss}
  \ell(\tau) = w_1 \underbrace{\E|y - \hat\mu|}_{\text{L1 location anchor}}
  + w_2 |{\tau - 0.5}| \underbrace{\E\bigl[m_\tau\bigr]}_{\text{ordering surrogate}}
  + w_3 \underbrace{\E\bigl[\max\bigl(\tau\,e,\,(\tau{-}1)e\bigr)\bigr]}_{\text{quantile}},
\end{equation}
where $e = y - \hat q_\tau$ and the ordering surrogate is
\[
  m_\tau = \begin{cases} \max(0,\; \hat q_\tau - y) & \text{if } \tau < 0.5, \\ \max(0,\; y - \hat q_\tau) & \text{if } \tau \geq 0.5.
  \end{cases}
\]
Here $y$ is the response (prediction target) and $x$ the input (conditioning variable); for the Bayesian inference instantiation (Section~\ref{sec:gbc}, noise outsourcing), replace $y \to \theta$ and $x \to y$.

The first term (location anchor) prevents mode-collapse by encouraging the network's location output to track the conditional median. The second term is a \emph{quantile-ordering surrogate}: for $\tau < 0.5$, it penalizes when $\hat q_\tau$ exceeds $y$; for $\tau \geq 0.5$, it penalizes when $\hat q_\tau$ falls below $y$.  This is not a direct non-crossing constraint between quantile levels $\tau_1 < \tau_2$ (as in \citealp{cannon2018non}), but rather a soft non-crossing penalty: if lower quantiles stay below $y$ and upper quantiles stay above $y$, they are less likely to cross.  This soft penalty works well in practice. The $|\tau - 0.5|$ weighting strengthens the penalty at extreme quantile levels.  Quantile crossings are not formally prevented but were not observed to degrade CRPS in any benchmark; a strict non-crossing guarantee would require isotonic post-processing or the constrained architecture of \citet{cannon2018non}. The third term is the standard quantile loss \citep{koenker2005quantile}, which is minimized at the true conditional quantile.

The default weights $(w_1, w_2, w_3) = (0.3, 0.3, 0.4)$ work well for smooth and heteroskedastic responses. For jump-process benchmarks with sharp discontinuities, reducing the $L_1$ anchor to $(0.1, 0.2, 0.7)$ (``quantile-dominant'') improves CRPS by freeing the network from smoothing across boundaries; these two configurations are fixed a priori (not tuned on test data) and reported in all tables. In practice, a simple diagnostic guides the choice: if the marginal response distribution is bimodal or the domain contains known regime boundaries, quantile-dominant weights are preferred; otherwise the default weights are used.  This diagnostic inspects the training responses and is therefore data-dependent, though it uses only the marginal distribution (not test data).  Table~\ref{tab:flowers} below shows that default weights on the jump benchmarks (Phantom, Star) yield CRPS $3$--$4\times$ worse than quantile-dominant, confirming that the weight choice materially affects performance on these problems. We train with Adam \citep{kingma2015adam} and cosine annealing \citep{loshchilov2017sgdr}.

\subsection{GBC algorithm}

\begin{algorithm}
\caption{Generative Bayesian Computation (GBC) for Surrogate Modeling}\label{alg:gbc}
\begin{algorithmic}[1]
\STATE \textbf{Train:}
  \STATE\quad Obtain training pairs $(x^{(i)}, y^{(i)})$ from the forward model,
         $i = 1, \ldots, N$.
  \STATE\quad Train IQN to obtain $G_{\hat\phi}$ (the fitted network) by minimizing~\eqref{eq:loss} on
         $\{(x^{(i)}, y^{(i)}, \tau^{(i)})\}$ via Adam + cosine annealing.
\STATE \textbf{Test:} Given new input $x^*$:
  \STATE\quad Draw $\tau^{(b)} \sim \mathrm{Uniform}[0,1]$ for
         $b = 1, \ldots, B$.
  \STATE\quad Return $\hat y^{(b)} = \hat q_{\tau^{(b)}}$ from $G_{\hat\phi}(\tau^{(b)},\, x^*)$
         as predictive samples (the $\hat\mu$ head is not used at test time).
\end{algorithmic}
\end{algorithm}

Training GBC costs $\mathcal{O}(N \cdot C_{\rm fwd})$ where $C_{\rm fwd}$ is the cost of one network forward pass. For a GP with $n$ training points, hyperparameter estimation requires $\mathcal{O}(n^3)$ flops per likelihood evaluation (Cholesky factorization of the $n \times n$ covariance matrix). Predictive mean at a new point costs $\mathcal{O}(n^2)$ from scratch, or $\mathcal{O}(n)$ if the Cholesky factor is cached; the full predictive variance requires an additional $\mathcal{O}(n^2)$ solve. On the Friedman $d{=}10$ benchmark at $n = 2{,}000$, GP training requires $47.6$\,s (CPU) versus $17.7$\,s for GBC (GPU), with GBC also achieving lower RMSE (Table~\ref{tab:friedman}); absolute timings reflect hardware differences (Section~\ref{sec:friedman} separates asymptotic scaling). Beyond $n = 2{,}000$ (the largest size tested for the GP baseline on this hardware), the dense-covariance GP becomes infeasible while GBC scales linearly. Once trained, generating $B$ predictive samples at any new input $x^*$ costs $\mathcal{O}(B \cdot C_{\rm fwd})$ --- no matrix factorization, no MCMC chain.

\subsection{Unified GBC framework}
\label{sec:gbc-framework}

The following definition unifies the variants used throughout this paper.

\begin{definition}[GBC Framework]\label{def:gbc}
Given training pairs $\{(x_i, y_i)\}_{i=1}^N$ from a forward model, GBC proceeds in three steps: (i)~compute features $z = T(x)$ via a (possibly identity) transformation $T\colon \R^d \to \R^p$; (ii)~train an IQN $G_\phi(\tau, z)$ to approximate $Q_\tau(Y \mid Z{=}z)$ via the three-term loss~\eqref{eq:loss}; and (iii)~at a new input $x^*$ with $z^* = T(x^*)$, sample $\tau_1, \ldots, \tau_B \stackrel{\text{iid}}{\sim} U(0,1)$ and return $\{G_\phi(\tau_b, z^*)\}_{b=1}^B$ as predictive samples.
\end{definition}

\noindent
The two variants in Section~\ref{sec:benchmarks} are special cases: \textbf{GBC} (default) sets $T(x) = x$; \textbf{GBC-Aug} sets $T(x) = [x,\, \hat{c}_\psi(x)]$ where $\hat{c}_\psi\colon \calX \to [0,1]$ is a learned boundary classifier (Section~\ref{sec:gbc-aug}).

The next three propositions establish approximation, accuracy, and consistency guarantees for IQN quantile estimation under the quantile (check) loss.  In practice, GBC minimizes the three-term loss~\eqref{eq:loss} rather than the quantile loss alone; Remark~\ref{rem:composite} below discusses the relationship between the theoretical and practical objectives.

\begin{proposition}[Universal approximation of conditional quantiles]
\label{prop:uat}
Assume $\calX \subset \R^d$ is compact and the conditional quantile function $(\tau, x) \mapsto Q_\tau(Y \mid x)$ is continuous on $[0,1] \times \calX$. Then for any $\varepsilon > 0$, there exists a ReLU network $G_\phi\colon [0,1] \times \calX \to \R$ with finitely many hidden units such that
\[
  \sup_{\tau \in [0,1],\, x \in \calX} \bigl|G_\phi(\tau, x) - Q_\tau(Y \mid x)\bigr| < \varepsilon.
\]
\end{proposition}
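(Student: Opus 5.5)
The plan is to reduce Proposition~\ref{prop:uat} to the classical universal approximation theorem for ReLU networks on compact sets (Leshno--Lin--Pinkus--Schocken; or the constructive ReLU results of Yarotsky). The domain $K = [0,1]\times\calX$ is compact in $\R^{d+1}$ because it is a product of compact sets. By hypothesis, $Q(\tau,x) := Q_\tau(Y\mid x)$ is continuous on $K$, so it is uniformly continuous and bounded there. The classical theorem applies to any continuous function on a compact subset of Euclidean space and any non-polynomial activation, ReLU included. It therefore gives a one-hidden-layer ReLU network $N(\tau,x)=\sum_{j=1}^m a_j\,\mathrm{ReLU}(b_j\tau + c_j^\top x + d_j)+e$ with finitely many units and $\sup_K |N - Q| < \varepsilon$. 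Taking $G_\phi = N$ proves the statement as written, since the proposition only asks for some ReLU network in the joint input $(\tau,x)$.

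To tie the result to the IQN architecture of Figure~\ref{fig:iqn-arch}, I would add a second step showing that $N$ can be realized, up to an additional error of $\varepsilon/2$, in the form $f_{\rm out}(f_1(f_x(x)\odot f_\tau(\phi(\tau))))$. The key observation is that the cosine embedding is injective on $[0,1]$: its $j=1$ coordinate, $\cos(\pi\tau)$, is strictly decreasing there. So $\tau$ is a continuous function of $\phi(\tau)$, and the task becomes approximating the continuous map $(x,\phi(\tau))\mapsto Q$. The obstacle is the elementwise product, which is a multiplicative merge rather than a concatenation.

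I would handle the product in one of two ways. The first option is to choose $f_x(x)$ with one block equal to $\mathrm{ReLU}(c_j^\top x + d_j)$ and a constant-one block (realized as $\mathrm{ReLU}(0\cdot x+1)$), and to choose $f_\tau$ so that one block is constant one and another block approximates $\mathrm{ReLU}(b_j\tau)$ as a function of $\phi(\tau)$. Then the product $f_x\odot f_\tau$ passes both the $x$-features and the $\tau$-features through unchanged, and $f_1, f_{\rm out}$ together act as a ReLU network on their concatenation. Concretely, the product contains $[\,g(x),\,h(\tau)\,]$ up to small error, and a further one-hidden-layer ReLU network in $f_1$ approximates $Q$ as a continuous function of $(g(x),h(\tau))$. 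For this to work, $g$ and $h$ must be injective enough; taking $g(x)$ to include $\mathrm{ReLU}(x_i + M)$ for $M$ large recovers $x$ affinely on compact $\calX$, and similarly for $\tau$. The second option, if layer widths allow, is to appeal to Stone--Weierstrass: finite sums of products $u(x)v(\tau)$ are dense in $C(K)$, and the product-merge architecture followed by a linear readout represents exactly such sums.

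The main obstacle I expect is this architectural step. The passthrough trick requires each of $f_x$ and $f_\tau$ to output a constant unit on certain coordinates, and ReLU networks with a bias can do this. It also requires $f_1$ to have enough width, and I assume all widths are free. I would then finish with a triangle inequality that combines the error of approximating $Q$ in the concatenated features, the error of approximately recovering $\tau$ from $\phi(\tau)$ (using uniform continuity of the outer network on a compact set), and the error of the inner blocks. Each of these is made smaller than $\varepsilon/3$.
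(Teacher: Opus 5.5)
Your proposal is correct and follows essentially the same route as the paper. Both arguments apply the universal approximation theorem on the compact joint domain $[0,1]\times\calX$, then realize the joint-input network inside the IQN's elementwise merge by padding $f_x$ and $f_\tau$ with constant-one coordinates (the paper's ``bias-term padding''), so the product passes an affine copy of $x$ and an injective function of $\tau$ to $f_1$. Your proposal goes further than the paper's sketch in two places. You recover $\tau$ from the cosine embedding through the strictly monotone coordinate $\cos(\pi\tau)$. You also apply the approximation theorem afresh to these injective features, rather than trying to factor the joint hidden units $\mathrm{ReLU}(b_j\tau + c_j^\top x + d_j)$, which would not work.
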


\noindent
\textit{Proof idea.} The conditional quantile function is continuous on the compact set $[0,1] \times \calX$; the standard UAT \citep{hornik1989multilayer} gives a ReLU approximator on this joint input. The IQN's elementwise merge can realize the required joint-input network via bias-term padding. Full proof in Supplement~A.

\begin{proposition}[CRPS approximation bound]
\label{prop:crps}
Let $Q_\tau(Y \mid x)$ denote the true conditional quantile function and $G_\phi(\tau, x)$ an approximation satisfying $\sup_{\tau \in [0,1]} |G_\phi(\tau, x) - Q_\tau(Y \mid x)| \leq \varepsilon$ for a given $x$.  Then for any observation $y$,
\[
  \bigl|\mathrm{CRPS}(G_\phi, y) - \mathrm{CRPS}(Q, y)\bigr| \leq 2\varepsilon.
\]
\end{proposition}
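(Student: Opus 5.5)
The plan is to write CRPS in its quantile (check-loss) representation and compare the two integrands pointwise in $\tau$. For a predictive distribution with quantile function $q(\tau)$, the standard identity gives
\[
  \mathrm{CRPS}(q, y) = 2\int_0^1 \rho_\tau\bigl(y - q(\tau)\bigr)\,d\tau,
  \qquad \rho_\tau(e) = \max\bigl(\tau e, (\tau-1)e\bigr),
\]
where $\rho_\tau$ is the check loss from the third term of~\eqref{eq:loss}. We take this representation as the working definition of $\mathrm{CRPS}(G_\phi, y)$ when $G_\phi(\cdot, x)$ is used as a sampler $\tau \mapsto G_\phi(\tau,x)$ with $\tau \sim U(0,1)$. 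It applies even when $G_\phi(\cdot,x)$ is not monotone in $\tau$.

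To justify this, note that the law of $G_\phi(\tau,x)$ with $\tau\sim U(0,1)$ is a genuine distribution. Its quantile function is the monotone rearrangement of $\tau\mapsto G_\phi(\tau,x)$. The energy form $\E|X-y| - \tfrac12\E|X-X'|$ depends only on this law, so we may replace $G_\phi$ by its rearrangement without changing the CRPS. The rearrangement is also sup-norm contractive toward any nondecreasing function, in particular toward $Q_\cdot(Y\mid x)$. Hence the hypothesis $\sup_\tau|G_\phi - Q|\le\varepsilon$ survives the rearrangement. I expect this to be the main obstacle: the paper explicitly allows quantile crossings, so the monotonicity issue has to be handled here or by an explicit convention. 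The rest is routine.

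Next, $\rho_\tau$ is Lipschitz in its argument with constant $\max(\tau, 1-\tau) \le 1$. Therefore, for each $\tau$,
\[
  \bigl|\rho_\tau(y - G_\phi(\tau,x)) - \rho_\tau(y - Q_\tau(Y\mid x))\bigr| \le |G_\phi(\tau,x) - Q_\tau(Y\mid x)| \le \varepsilon.
\]
Integrating over $\tau\in[0,1]$ and multiplying by the factor $2$ gives
\[
  \bigl|\mathrm{CRPS}(G_\phi,y) - \mathrm{CRPS}(Q,y)\bigr| \le 2\varepsilon.
\]
Integrability of both terms is immediate. $Q$ has finite mean whenever its CRPS is finite, and $G_\phi$ differs from $Q$ by a bounded amount. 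A sharper constant is available by integrating $\max(\tau,1-\tau)$ instead of bounding it by $1$, which gives $\tfrac32\varepsilon$. The stated bound $2\varepsilon$ only needs the crude estimate.

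As a cross-check that avoids the quantile representation, one can also argue through the energy form. Couple $X = G_\phi(\tau,x)$ with $X^\star = Q_\tau(Y\mid x)$ using the same $\tau$, so that $|X - X^\star|\le\varepsilon$ almost surely. Then $|\E|X-y| - \E|X^\star - y|| \le \varepsilon$. Using an independent copy $\tau'$ for $X'$ and $X^{\star\prime}$ gives $\tfrac12|\E|X-X'| - \E|X^\star-X^{\star\prime}|| \le \varepsilon$. The total is again $2\varepsilon$. This route never needs monotonicity of $G_\phi$, because the coupling is through $\tau$ directly. I would present this argument as the main proof and keep the check-loss computation as a remark.
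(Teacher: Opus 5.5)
Your argument is correct, and your check-loss computation is the paper's proof. The paper invokes $\mathrm{CRPS}=2\int_0^1\rho_\tau(y-F^{-1}(\tau))\,d\tau$, uses that $\rho_\tau$ is $1$-Lipschitz in the quantile argument, and integrates.

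As sketched, though, the paper plugs $G_\phi(\cdot,x)$ straight into that formula. This tacitly treats it as a quantile function even though \eqref{eq:loss} allows crossings, and your rearrangement step is what makes the substitution legitimate. One sentence of yours needs correcting: the representation does not ``apply even when $G_\phi(\cdot,x)$ is not monotone.'' Write $\rho_\tau(e)=\tau e+(-e)_+$. Then only the term $-\int_0^1\tau g(\tau)\,d\tau$ of $\int_0^1\rho_\tau(y-g(\tau))\,d\tau$ depends on how $g$ is ordered, and by Hardy--Littlewood this term is smallest for the nondecreasing rearrangement. So for non-monotone $g$, the quantity $2\int_0^1\rho_\tau(y-g(\tau))\,d\tau$ generally overstates the CRPS of the law of $g(U)$; for example, $g(\tau)=1-\tau$ and $y=0$ give $2/3$ versus $1/3$.

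The sup-norm contraction you flag as the main obstacle is actually one line. The nondecreasing rearrangement is order-preserving and fixes the nondecreasing functions $Q\pm\varepsilon$, so $Q-\varepsilon\le G_\phi\le Q+\varepsilon$ passes directly to the rearrangement.

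Your preferred main proof, the same-$\tau$ coupling in the energy form $\E|X-y|-\tfrac12\E|X-X'|$, is a genuinely different route from the paper's. It never mentions quantile functions, so monotonicity and rearrangement disappear altogether. It also works with exactly the quantity that the paper's estimator $\widehat{\mathrm{CRPS}}$ discretizes, since that formula is symmetric in the $q_m$.

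The check-loss route buys two things in return. First, it connects directly to the $w_3$ training term and to the integrated risk in Proposition~\ref{prop:consistency}. Second, it gives finer constants than your $\tfrac32\varepsilon$. Let $G^\ast$ be the rearrangement and set $q_t=(1-t)Q+tG^\ast$. The set $\{\tau:q_t(\tau)>y\}$ is, up to a null set, an interval $(p_t,1]$. Hence the $t$-derivative of $2\int_0^1\rho_\tau(y-q_t(\tau))\,d\tau$ is at most $\varepsilon\bigl(p_t^2+(1-p_t)^2\bigr)\le\varepsilon$ in absolute value, giving a bound of $\varepsilon$. Shifting a point mass at $y$ by $\varepsilon$ shows this bound is sharp. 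The coupling argument bounds the accuracy and spread terms separately, so it cannot see this and yields only $2\varepsilon$.
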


\noindent
\textit{Proof idea.} The CRPS equals $2\int_0^1 \rho_\tau(y - F^{-1}(\tau))\,d\tau$ \citep{gneiting2007strictly}, and the check loss $\rho_\tau$ is Lipschitz-1 in its quantile argument. Integrating the pointwise bound gives $2\varepsilon$. Full proof in Supplement~A.

\begin{proposition}[Consistency]
\label{prop:consistency}
Assume $\calX \subset \R^d$ is compact, the response $Y$ is bounded, the conditional quantile function $Q_\tau(Y \mid x)$ is $\beta$-H\"{o}lder continuous in $(x, \tau)$ for some $\beta > 0$, and the network class $\mathcal{G}_n$ consists of ReLU networks whose depth $L_n$ and width $W_n$ satisfy $L_n W_n \log(L_n W_n) = o(n / \log n)$.  Then the empirical quantile loss minimizer $\hat{G}_n \in \mathcal{G}_n$ satisfies
\[
  \E\!\left[\int_0^1 \rho_\tau(Y - \hat{G}_n(\tau, X))\,d\tau\right] \;\longrightarrow\; \inf_G \E\!\left[\int_0^1 \rho_\tau(Y - G(\tau, X))\,d\tau\right] \quad\text{as } n \to \infty.
\]
In particular, the expected CRPS of $\hat{G}_n$ converges to the Bayes-optimal CRPS.
\end{proposition}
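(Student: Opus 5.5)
The plan is the standard approximation--estimation decomposition for empirical risk minimization. Write
\[
  R(G) = \E\!\left[\int_0^1 \rho_\tau(Y - G(\tau,X))\,d\tau\right],
\]
and let $\widehat R_n$ be its empirical counterpart. We may take $\tau$ to be sampled uniformly, so each summand $\rho_{\tau_i}(y_i - G(\tau_i,x_i))$ is an unbiased estimate of $R(G)$. Let $R^\star = \inf_G R(G)$. Since $\rho_\tau$ is minimized in expectation at the true quantile pointwise in $(\tau,x)$, the infimum is attained at $G^\star(\tau,x) = Q_\tau(Y\mid x)$. Then
\[
  R(\hat G_n) - R^\star \le \Bigl[R(\hat G_n) - \widehat R_n(\hat G_n)\Bigr] + \Bigl[\widehat R_n(G_n^\circ) - R(G_n^\circ)\Bigr] + \Bigl[R(G_n^\circ) - R^\star\Bigr],
\]
where $G_n^\circ \in \mathcal G_n$ is a best approximant of $G^\star$. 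The middle inequality uses $\widehat R_n(\hat G_n) \le \widehat R_n(G_n^\circ)$. The first two brackets are bounded by $2\sup_{G\in\mathcal G_n}|R(G)-\widehat R_n(G)|$. It therefore suffices to show that the approximation error and the uniform deviation both vanish.

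\textbf{Approximation term.} Because $\rho_\tau$ is $1$-Lipschitz in its argument (as used for Proposition~\ref{prop:crps}),
\[
  R(G_n^\circ) - R^\star \le \sup_{\tau,x}|G_n^\circ(\tau,x) - Q_\tau(Y\mid x)|.
\]
Proposition~\ref{prop:uat} gives that this is eventually below any $\varepsilon$, provided $L_nW_n\to\infty$. I will implicitly assume this growth, since the stated condition only bounds growth from above. For a rate one could instead invoke Yarotsky-type ReLU approximation bounds for $\beta$-H\"older functions, of order $(L_nW_n)^{-2\beta/(d+1)}$ up to logs, but consistency only needs convergence to zero.

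\textbf{Estimation term.} This is the main obstacle. Since $Y$ is bounded by some $M$, I will truncate network outputs at $\pm M$, i.e., clip $\hat G_n$. Clipping can only decrease the check loss when the target lies in $[-M,M]$, and clipped ReLU networks remain in a class of the same complexity. The loss is then bounded by $2M$ and $1$-Lipschitz in $G$. By symmetrization and the Ledoux--Talagrand contraction inequality, the uniform deviation is controlled by the Rademacher complexity of the clipped class. That complexity is in turn bounded through the pseudo-dimension, for which Bartlett--Harvey--Liaw--Mehrabian give $\mathrm{Pdim}(\mathcal G_n) = O(L_n \cdot S_n \log S_n)$ with $S_n$ the number of parameters. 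Here I would identify the hypothesis's $L_nW_n\log(L_nW_n)$ with this complexity measure; if parameters scale like $W_n^2L_n$, the hypothesis should be read with $W_n$ as the parameter count, which I would state explicitly. This yields
\[
  \E\sup_{G\in\mathcal G_n}|R(G)-\widehat R_n(G)| \lesssim M\sqrt{\mathrm{Pdim}(\mathcal G_n)\log n / n},
\]
which is $o(1)$ under the stated growth condition.

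\textbf{Conclusion.} Combining the two terms gives $\E[R(\hat G_n)] \to R^\star$. For the CRPS statement, I apply the identity $\mathrm{CRPS}(F,y) = 2\int_0^1\rho_\tau(y-F^{-1}(\tau))\,d\tau$ cited in Proposition~\ref{prop:crps}. This shows that expected CRPS equals $2R(G)$, so CRPS convergence to the Bayes-optimal value $2R^\star$ follows immediately. One subtlety remains: $\hat G_n(\cdot,x)$ need not be monotone in $\tau$. The identity still holds if CRPS is interpreted as the quantile-integral score, and the monotone rearrangement only lowers this score, so it does not affect the limit.
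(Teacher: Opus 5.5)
Your proposal is correct and follows the same route as the paper's sketch: an approximation--estimation decomposition. The approximation error is controlled by Proposition~\ref{prop:uat} together with the $1$-Lipschitz property of $\rho_\tau$, and the estimation error by the Rademacher complexity of $\mathcal{G}_n$ composed with the check loss; the paper cites \citet{padilla2022quantile} for that rate, where you go through contraction and the Bartlett--Harvey--Liaw--Mehrabian pseudo-dimension bound. The paper's sketch leaves implicit the points you make explicit, and your handling of each is sound: that $L_nW_n\to\infty$ is also needed, that $W_n$ must be read as a parameter count for the growth condition to control the complexity, the truncation at $\pm M$ (an output-bounded class, which the Rademacher argument needs anyway), and the monotone-rearrangement argument for the CRPS claim.
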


\noindent
\textit{Proof idea.} Propositions~\ref{prop:uat}--\ref{prop:crps} control approximation error. Estimation error is bounded by the Rademacher complexity of $\mathcal{G}_n$ composed with the Lipschitz check loss: $O(\sqrt{L_n W_n \log(L_n W_n) / n})$ \citep{padilla2022quantile}. The network-size condition ensures both terms vanish; minimax rates follow from \citet{schmidt-hieber2020nonparametric}. Full proof in Supplement~A.

\begin{remark}[Theory vs.\ practice]\label{rem:composite}
Propositions~\ref{prop:uat}--\ref{prop:consistency} analyze the quantile loss (the $w_3$ term in~\eqref{eq:loss}).  The $L_1$ anchor and ordering surrogate serve as finite-sample regularizers --- preventing mode collapse and quantile mis-ordering --- but introduce a bias controlled by $w_1$ and $w_2$. As $w_3 \to 1$, the training objective approaches the pure quantile risk and the theoretical guarantees apply directly; the quantile-dominant configuration $(0.1, 0.2, 0.7)$ used for jump benchmarks is closer to this regime. The empirical results in Sections~\ref{sec:benchmarks}--\ref{sec:real} demonstrate practical performance of the composite-loss estimator; characterizing the bias--variance tradeoff as a function of $(w_1, w_2, w_3)$ remains open. These results guarantee existence and consistency but not finite-sample calibration: IQN coverage can deviate from nominal in either direction (Section~\ref{sec:discussion}).
\end{remark}

\section{Boundary Augmentation and Active Learning}\label{sec:extensions}

GBC's IQN core handles non-stationarity by default, but two extensions improve performance in specific regimes: an input-augmentation strategy for sharp jump boundaries, and an ensemble-based acquisition criterion for active learning.

\subsection{Non-stationarity and heteroskedasticity}

Because the IQN $G_\phi$ is composed of feedforward networks that are universal approximators \citep{hornik1989multilayer}, it imposes no stationarity assumption: the map $(\tau, x) \mapsto G_\phi(\tau, x)$ can learn arbitrary response surfaces, including those with heteroskedastic variance and discontinuous structure. In contrast, a stationary GP kernel assigns positive covariance to any two points whose inputs are close, encoding a smooth, homoskedastic prior over the response surface.

For a heteroskedastic process $y = f(x) + \sigma(x)\epsilon$, $\epsilon \sim \mathcal{N}(0,1)$, a stationary GP effectively estimates an average noise level $\bar\sigma$, producing overly-wide intervals in low-variance regions and under-coverage in high-variance regions. The IQN adapts its conditional quantile estimates locally to $\sigma(x)$, producing interval widths that vary with the local noise level.  Whether the resulting coverage matches the nominal level depends on sample size and signal-to-noise ratio; Section~\ref{sec:moto} illustrates both the adaptation and the residual coverage gap on the motorcycle benchmark.

\subsection{Boundary augmentation for jump processes}
\label{sec:gbc-aug}

When the response surface contains sharp jump discontinuities, the IQN must simultaneously learn (i)~the boundary geometry and (ii)~the conditional quantile function on each side.  Learning the boundary from data alone is feasible (Section~\ref{sec:bgp}) but can be improved by providing a boundary indicator as an additional input feature.  An alternative implicit approach --- a mixture-of-experts (MoE) IQN with a learned gating network that routes inputs to $K$ expert heads --- was tested and failed: with soft routing (weighted average of experts), the gate never commits to a single expert per region (mean specialization $0.64$--$0.71$ out of $1.0$), and the mixture output is no better than a single IQN (CRPS $0.036$--$0.039$ vs.\ $0.031$); with hard routing (argmax at inference), predictions are catastrophic (CRPS $> 0.59$) because individual experts, trained only to contribute partial signals under soft mixing, cannot produce valid standalone predictions.  The softmax gate is inherently smooth and cannot represent a step function in input space, making implicit partitioning fundamentally unsuitable for sharp discontinuities.

Inspired by the modular pipeline of MJGP \citep{flowers2026modular}, we propose \emph{GBC-Aug}, a three-step augmentation. First, a two-component Gaussian mixture model (EM) is fit to the marginal training responses $\{y_i\}$, producing hard binary labels $c_i \in \{0,1\}$ that separate the two regimes. Second, an MLP classifier $\hat{f}\colon \calX \to [0,1]$ is trained via binary cross-entropy loss to predict $P(c = 1 \mid x)$ from the inputs alone. Third, the classifier output is appended as an extra input feature and a standard IQN is trained on the augmented data $\{([x_i,\, \hat{f}(x_i)],\; y_i)\}$. The classifier provides a soft boundary indicator that the IQN can exploit to localize its quantile estimates without learning the boundary geometry from scratch.  The same IQN architecture and loss function are used as in plain GBC; the only change is the additional input dimension.

GBC-Aug applies when the response has binary jump structure (two distinct regimes). It is not needed for smooth boundaries (e.g., AMHV) or when no specialist competitor exists (e.g., Michalewicz at $n = 90{,}000$). The binary assumption is limiting: if the response has more than two regimes or if the marginal modes overlap substantially, EM clustering may misassign labels, propagating errors to the classifier and ultimately to the IQN.  The Michalewicz benchmark already illustrates a partial failure of this kind (its marginal is not bimodal), which is why plain GBC is used there instead. Classifier accuracy is the key bottleneck: on Phantom, replacing the final three-layer MLP (99.98\% accuracy, CRPS~$= 0.009$) with a single-hidden-layer classifier (82\% accuracy) degrades GBC-Aug CRPS to $0.024$ (Section~\ref{sec:flowers}).

\subsection{Active learning with GBC}
\label{sec:gbc-al}

Active learning (AL) for surrogates sequentially selects the next input $x_{n+1}$ to minimize a global prediction error criterion \citep{mackay1992information, cohn1994neural, sauer2023active}. At any test location $x$, GBC generates $B$ predicted outputs $\hat{y}^{(b)} = G_\phi(\tau^{(b)}, x)$, $\tau^{(b)} \sim U(0,1)$, $b = 1, \ldots, B$. The Monte Carlo estimate of predictive variance is
\[
  \hat\sigma^2_{\rm GBC}(x) = \frac{1}{B-1}\sum_{b=1}^B \bigl(\hat{y}^{(b)} - \bar{\hat y}\bigr)^2, \quad \bar{\hat y} = B^{-1} \sum_{b=1}^B \hat y^{(b)}.
\]
More generally, GBC provides the full predictive distribution $P(\hat y \mid x)$, enabling a family of acquisition criteria.  One natural choice is the width of a predictive interval:
\begin{equation}\label{eq:al-criterion}
  x_{n+1}^* = \arg\max_{x \in \calX}\;
  \bigl[F^{-1}_{\hat y \mid x}(0.95) - F^{-1}_{\hat y \mid x}(0.05)\bigr],
\end{equation}
which generalizes the GP's variance-based ALC criterion to non-Gaussian distributions.  In practice, however, we found that \emph{ensemble disagreement} is more robust than~\eqref{eq:al-criterion} in sequential settings, because the single-model quantile spread can collapse as the IQN overfits the growing training set. Ensemble disagreement is defined as
\[
  a(x) = \sqrt{\frac{1}{K-1}\sum_{k=1}^{K}\bigl(\tilde{y}_k(x) - \bar{\tilde{y}}(x)\bigr)^2}, \quad \tilde{y}_k(x) = \mathrm{median}_b\; G_{\phi_k}(\tau^{(b)}, x),
\]
where $\tilde{y}_k(x)$ is the median prediction from ensemble member~$k$ and $\bar{\tilde{y}} = K^{-1}\sum_k \tilde{y}_k$.  All active learning experiments in Sections~\ref{sec:al-rocket}--\ref{sec:al-sat} use ensemble disagreement as the acquisition criterion.

The ensemble uses a $K{=}3$ randomized prior architecture \citep{osband2018randomized}: each IQN member $k$ computes $G_{\phi_k}(\tau,x) = g_{\phi_k}(\tau,x) + \alpha\, g_{0,k}(\tau,x)$, where $g_{0,k}$ is a frozen random-weight network (scale $\alpha = 0.5$) whose parameters are drawn once at initialization and never updated.  This design addresses a fundamental difference between NN ensembles and GP posteriors.  GP posterior samples are draws from a stochastic process whose diversity is guaranteed by the posterior covariance: even as $n \to \infty$, the samples remain distinct wherever posterior uncertainty is nonzero.  By contrast, NN ensemble members trained on the same data with different random seeds converge to nearly identical predictions as $n$ grows, because SGD drives all members toward the same empirical risk minimizer.  Bootstrap resampling partially restores diversity by varying the training set, but in our experiments bootstrap ensembles were consistently the worst acquisition strategy --- 17\% worse than random selection on the 7D Proxy and 30\% worse on SatMini (Supplement~D) --- because the resampled training sets at small $n$ produce degenerate fits.  Randomized priors inject \emph{structural} diversity that persists regardless of $n$: in data-sparse regions, $g_{\phi_k}$ cannot learn to cancel the random $g_{0,k}$, so different members make different predictions and disagreement is high; in data-rich regions, $g_{\phi_k}$ dominates and the prior perturbation becomes negligible.  Among the alternatives tested --- bootstrap ensembles, SNGP \citep{liu2020simple}, anchored ensembles \citep{pearce2020uncertainty}, and last-layer Laplace --- randomized priors produced the best acquisition signal on every benchmark (Supplement~D).

\section{Implementation and Synthetic Benchmarks}\label{sec:benchmarks}

We first describe the implementation shared by all benchmarks, then evaluate GBC on three synthetic problems of increasing complexity: a heteroskedastic 1D dataset (motorcycle), piecewise-stationary processes in $d = 2$--$4$ (BGP), and a high-dimensional scaling study ($d = 10$, Friedman).

\subsection{Implementation details}

All GBC models are implemented in PyTorch \citep{paszke2019pytorch}. Code to reproduce every experiment in this paper is available at
\ifanonymous
[URL removed for blind review].
\else
\url{https://github.com/vadimsokolov/gbc-surrogate}.
\fi
The IQN uses the same architecture throughout: the feature network $f_x$ and quantile projection $f_\tau$ each contain one hidden layer of 256 units (ReLU), followed by a shared hidden layer $f_1$ of 256 units after the elementwise merge, with quantile embedding dimension $n_h = 32$ and the three-term loss~\eqref{eq:loss} with default weights $(0.3, 0.3, 0.4)$ unless otherwise noted.  Training uses Adam \citep{kingma2015adam} with learning rate $10^{-3}$ and cosine annealing \citep{loshchilov2017sgdr}. Epochs range from $3{,}000$ to $8{,}000$ depending on dataset size and are specified per benchmark below; all were fixed before seeing test-set results. The architecture (depth, width, $n_h$) was not tuned per benchmark; the same configuration is used for all fourteen datasets, ranging from $d{=}1$, $n{=}133$ (motorcycle) to $d{=}10$, $n{=}90{,}000$ (Michalewicz).

Two variants of the GBC framework appear in this paper, differing only in the input representation. \textbf{GBC} (default) feeds the raw inputs $x$ to the IQN with the three-term loss~\eqref{eq:loss} and is used for all synthetic benchmarks and active learning experiments. \textbf{GBC-Aug} appends a learned boundary-classifier feature (Section~\ref{sec:gbc-aug}) and is used when the response exhibits binary jump structure (Phantom, Star). Both share the same IQN architecture and training loss; Definition~\ref{def:gbc} provides the unified formulation.

The comparator methods are:
\begin{itemize}
  \item \textbf{hetGP}: \texttt{mleHetGP} from the R package \texttt{hetGP}
        \citep{binois2018practical}, Mat\'{e}rn-$5/2$ kernel with jointly estimated latent mean and spatially-varying noise variance.
  \item \textbf{MJGP}: R package \texttt{jumpgp} \citep{flowers2026modular},
        EM clustering + classifier + local GP on augmented inputs.
  \item \textbf{Stationary GP}: Mat\'{e}rn-$5/2$ via maximum marginal likelihood.
  \item \textbf{DGP+ALC}: R package \texttt{deepgp} \citep{sauer2023active},
        2-layer deep GP with elliptical slice sampling, active learning via ALC.
\end{itemize}

Throughout we report three metrics: root mean-squared error (RMSE) for point prediction (or root mean-squared percentage error, RMSPE, when the baseline reports relative errors); continuous ranked probability score (CRPS) for distributional accuracy; and empirical 90\% coverage of predictive intervals (nominal $= 0.90$). CRPS simultaneously rewards calibration and sharpness and applies to any predictive distribution \citep{gneiting2007strictly}, making it our primary metric. We estimate CRPS from $M$ posterior quantile samples via $\widehat{\text{CRPS}}(F,y) = \frac{1}{M}\sum_{m=1}^{M}|q_m - y| - \frac{1}{2M^2}\sum_{m=1}^{M}\sum_{m'=1}^{M}|q_m - q_{m'}|$, where $q_m = F^{-1}(\tau_m)$ are quantiles at a regular grid $\tau_m = m/(M+1)$.

GBC models train on a single NVIDIA A100 GPU; GP baselines run on 16-core CPU (Intel Xeon). Wall-clock comparisons therefore reflect both algorithmic complexity ($\mathcal{O}(n)$ SGD vs.\ $\mathcal{O}(n^3)$ Cholesky) and hardware differences (GPU vs.\ CPU). To separate these effects, the Friedman scaling experiment in Section~\ref{sec:friedman} reports asymptotic growth rates alongside absolute times.

\subsection{Motorcycle data (hetGP comparison)}
\label{sec:moto}

The motorcycle helmet dataset (\texttt{MASS::mcycle}, $n=133$) is a canonical benchmark for heteroskedastic non-stationary emulation: head acceleration as a function of time after a simulated crash, with variance that increases sharply during the main deceleration phase (times 15--35\,ms). We compare GBC to \texttt{hetGP} \citep{binois2018practical}, the purpose-built heteroskedastic GP for this setting, rather than a stationary GP.

We use 50 random 80:20 train/test splits (identical for both methods). GBC trains an ensemble of $K{=}5$ IQNs (each $5{,}000$ epochs, different random seeds) on the same splits; predictions pool $B = 200$ quantile samples per model ($1{,}000$ total). No architecture modification is made relative to any other benchmark.

\begin{table}[H]
  \centering
  \caption{Motorcycle benchmark (\texttt{MASS::mcycle}, $n=133$): mean $\pm$ SE over
    50 replicates. \texttt{hetGP} is purpose-built for heteroskedastic data; GBC-IQN,
    with default loss weights and no custom kernel or noise model, achieves
    marginally lower CRPS.}
  \label{tab:moto}
  \begin{tabular}{lrrr}
    \toprule
    Method & RMSE & CRPS & 90\% Coverage \\
    \midrule
    hetGP (Matern-5/2)      & $\mathbf{23.81 \pm 0.49}$ & $12.56 \pm 0.24$ & $0.875$ \\
    GBC (IQN, single)        & $23.92 \pm 0.57$ & $12.56 \pm 0.29$ & $0.803$ \\
    GBC (IQN, $K{=}5$ ens.)  & $23.88 \pm 0.57$ & $\mathbf{12.49 \pm 0.29}$ & $0.830$ \\
    \bottomrule
  \end{tabular}
\end{table}

Figure~\ref{fig:moto} illustrates three approaches to predictive uncertainty. The stationary GP assigns near-constant interval width across the domain. The \texttt{hetGP} adapts via a latent noise process, producing wider intervals during the high-variance deceleration phase and narrower intervals elsewhere. GBC adapts its interval width to local variance without a parametric noise model --- the IQN's conditional quantile function inherently captures heteroskedasticity. Ensembling smooths the posterior mean and widens the intervals, partially closing the coverage gap with \texttt{hetGP} (from $0.803$ to $0.830$, though still below \texttt{hetGP}'s $0.875$).

The central finding is that GBC, with default loss weights and no custom noise model, is comparable to a purpose-built heteroskedastic GP on CRPS ($12.49$ vs.\ $12.56$; the difference is within standard-error overlap and should not be interpreted as a significant advantage). The \texttt{hetGP} achieves better coverage because its explicit noise model $\sigma^2_\delta(x)$ provides a well-calibrated Gaussian predictive distribution when the heteroskedastic Gaussian assumption holds --- a structural advantage at this small sample size ($n_{\rm train} = 106$) that GBC's nonparametric quantile estimates cannot fully match.

\begin{figure}[H]
  \centering
  \includegraphics[width=\textwidth]{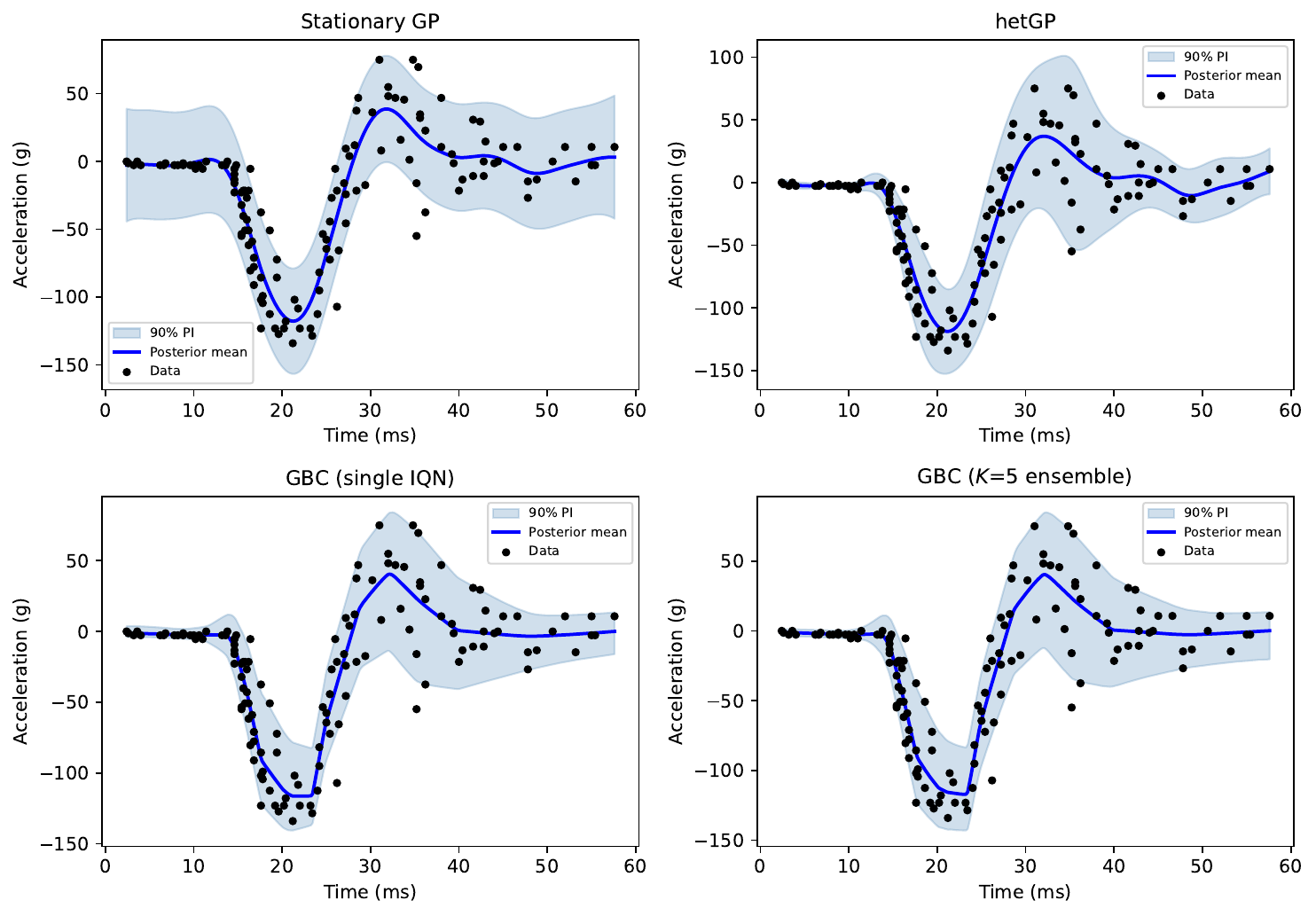}
  \caption{Motorcycle benchmark: stationary GP (top left), \texttt{hetGP}
    (top right), single IQN (bottom left), and $K{=}5$ IQN ensemble
    (bottom right).
    Shaded region: 90\% predictive interval. The stationary GP assigns
    near-constant uncertainty; \texttt{hetGP} adapts via a latent
    noise process; GBC adapts without a parametric noise model.}
  \label{fig:moto}
\end{figure}

\subsection{Piecewise and jump processes (BGP benchmarks)}
\label{sec:bgp}

Piecewise-continuous response surfaces arise whenever a physical system undergoes a regime change --- a phase transition, a structural failure threshold, or the capacity saturation seen in the AMHV data (Section~\ref{sec:amhv}). \citet{park2026active} define Bi-mixture GP (BGP) datasets that isolate this difficulty in a controlled setting:
\[
  f(x) = f_1(x)\,\mathbf{1}_{X_1}(x) + f_2(x)\,\mathbf{1}_{X\setminus X_1}(x),
  \quad x \in [-0.5,0.5]^d,
\]
where $X_1 = \{a^\top x \geq 0\}$ with $a$ drawn uniformly from $\{-1,+1\}^d$; $f_1 \sim \mathcal{GP}(0, 9\,k_{\rm SE})$ and $f_2 \sim \mathcal{GP}(13, 9\,k_{\rm SE})$ with squared-exponential kernel of length scale $0.1d$; and observations are noisy: $y_i = f(x_i) + \varepsilon_i$, $\varepsilon_i \sim \mathcal{N}(0,4)$. The two regimes are smooth GP surfaces separated by a hyperplane whose orientation varies randomly across replicates. The jump-to-noise ratio is $13/2 = 6.5$: large enough that the discontinuity is clearly visible but the noise prevents trivial boundary detection. A stationary GP must choose a single length scale that trades off smoothness within each regime against fidelity at the boundary --- exactly the tension that motivates non-stationary methods. We test $d \in \{2,3,4\}$.

For each dimension $d$, we generate 20 independent BGP realizations (new partition vector $a$ and new GP draws each time). Each realization provides $N = 2{,}000$ points in $[-0.5,0.5]^d$ with an 80:20 train/test split ($n_{\rm train} = 1{,}600$). Both methods use the same training data. The reported variance across replicates reflects both estimation error and variation in problem difficulty (different $a$ and GP draws).

\begin{table}[H]
  \centering
  \caption{BGP benchmarks (Park et al.\ 2026): mean $\pm$ SE over 20 replicates.
    GBC-IQN achieves lower CRPS at all dimensions; the advantage grows with $d$ as
    the GP's global length-scale estimate becomes less effective.}
  \label{tab:bgp}
  \begin{tabular}{lrrrrrr}
    \toprule
    & \multicolumn{2}{c}{GP (Mat\'{e}rn-5/2)} & \multicolumn{3}{c}{GBC (IQN)} \\
    \cmidrule(lr){2-3}\cmidrule(lr){4-6}
    $d$ & RMSE & CRPS & RMSE & CRPS & 90\% Cov.\ \\
    \midrule
    2 & $2.514 \pm 0.053$ & $1.379 \pm 0.024$ & $\mathbf{2.168 \pm 0.028}$ & $\mathbf{1.224 \pm 0.015}$ & $0.894$ \\
    3 & $2.762 \pm 0.045$ & $1.515 \pm 0.022$ & $\mathbf{2.197 \pm 0.018}$ & $\mathbf{1.236 \pm 0.011}$ & $0.885$ \\
    4 & $3.122 \pm 0.060$ & $1.716 \pm 0.030$ & $\mathbf{2.245 \pm 0.017}$ & $\mathbf{1.267 \pm 0.012}$ & $0.845$ \\
    \bottomrule
  \end{tabular}
\end{table}

GBC achieves lower CRPS on every replicate and dimension, with the average improvement growing with $d$: from 14\% RMSE / 11\% CRPS at $d=2$ to 28\% / 26\% at $d=4$ (standard errors in Table~\ref{tab:bgp}). The stationary GP faces a fundamental trade-off: its single length-scale parameter must simultaneously smooth within regions and accommodate the sharp jump; the IQN adapts its conditional quantile function to each side of the partition without a kernel assumption.

Figure~\ref{fig:bgp} illustrates the $d=2$ case. The stationary GP assigns a single length scale to the entire domain, producing blurring of the predicted mean near the boundary. The IQN captures the level shift cleanly, with narrower uncertainty intervals away from the boundary and broader intervals near it.

\begin{figure}[H]
  \centering
  \includegraphics[width=\textwidth]{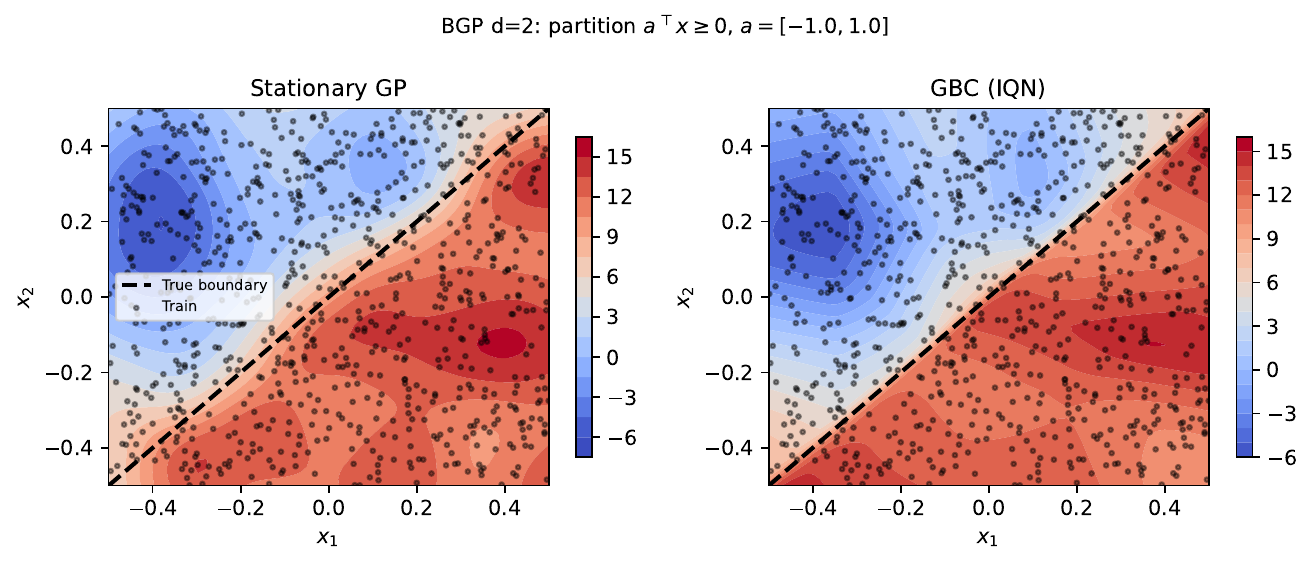}
  \caption{BGP benchmark ($d=2$): predicted mean surface from stationary GP (left)
    vs.\ GBC/IQN (right). Dashed line: true partition boundary $a^\top x = 0$.
    GP smooths across the jump; GBC captures the regime change.}
  \label{fig:bgp}
\end{figure}

\subsection{Friedman high-dimensional scaling}
\label{sec:friedman}

The Friedman function \citep{friedman1991multivariate,surjanovic2013virtual}
\[
  f(x) = 10 \sin(\pi x_1 x_2) + 20(x_3 - 0.5)^2 + 10 x_4 + 5 x_5, \quad x \in [0,1]^{10},
\]
is a standard benchmark for high-dimensional regression: five of the ten inputs are active (with a nonlinear interaction between $x_1$ and $x_2$), and the remaining five are inert, testing a method's ability to concentrate on the relevant subspace. The smooth but nonlinear structure makes this a natural GP problem, and it is widely used to benchmark scalability because GP training cost grows as $\mathcal{O}(n^3)$ while the function itself is cheap to evaluate at any $n$. We evaluate both prediction accuracy (at $n = 2{,}000$ in 10 replicates) and scalability (single runs at $n \in \{500, 1{,}000, 2{,}000, 5{,}000, 10{,}000, 20{,}000\}$).

Training data: $y^{(i)} = f(x^{(i)}) + \varepsilon^{(i)}$, $\varepsilon^{(i)} \sim \mathcal{N}(0,1)$. GP baseline uses Mat\'{e}rn-$5/2$ with noise, optimized via maximum marginal likelihood. GBC trains for $3{,}000$ epochs; both methods are evaluated on $500$ held-out test points.

\begin{table}[H]
  \centering
  \caption{Friedman $d=10$, $n=2{,}000$: mean $\pm$ SE over 10 replicates.
    GBC-IQN achieves lower RMSE and CRPS. Both methods produce coverage well above
    the 90\% nominal level, indicating conservative (overly wide) intervals.}
  \label{tab:friedman}
  \begin{tabular}{lrrr}
    \toprule
    Method & RMSE & CRPS & 90\% Coverage \\
    \midrule
    GP (Matern 5/2) & $0.590 \pm 0.016$ & $0.378 \pm 0.005$ & $0.995$ \\
    GBC (IQN)       & $\mathbf{0.520 \pm 0.009}$ & $\mathbf{0.323 \pm 0.004}$ & $0.991$ \\
    \bottomrule
  \end{tabular}
\end{table}

GBC achieves RMSE~$= 0.520 \pm 0.009$ and CRPS~$= 0.323 \pm 0.004$, versus GP's RMSE~$= 0.590 \pm 0.016$ and CRPS~$= 0.378 \pm 0.005$ --- a 12\% improvement on RMSE and 14\% on CRPS, with GBC achieving lower CRPS on every individual replicate (10/10).

Table~\ref{tab:scaling} and Figure~\ref{fig:scaling} report the scaling behavior. At $n = 2{,}000$, GBC is both faster (17.7\,s GPU vs.\ 47.6\,s CPU) and more accurate. Beyond $n = 2{,}000$, the dense-covariance GP cannot be evaluated on a standard workstation, while GBC continues to improve: RMSE~$= 0.40$ at $n = 5{,}000$, $0.36$ at $n = 10{,}000$, and $0.29$ at $n = 20{,}000$ (training time: 152.5\,s). On this benchmark and hardware configuration, GBC becomes both faster and more accurate at approximately $n = 1{,}500$; the crossover point will vary with problem structure and hardware.

\begin{table}[H]
  \centering
  \caption{Friedman $d=10$ scaling study: training time (seconds) and test RMSE
    at each sample size. GP is infeasible beyond $n = 2{,}000$.
    Each row is a single run; the $n{=}2{,}000$ RMSE values differ from
    Table~\ref{tab:friedman}'s 10-replicate means because of sampling variability
    in the training set.}
  \label{tab:scaling}
  \begin{tabular}{rrrrrr}
    \toprule
    & \multicolumn{2}{c}{GP (Matern 5/2)} & \multicolumn{2}{c}{GBC (IQN)} \\
    \cmidrule(lr){2-3}\cmidrule(lr){4-5}
    $n$ & Time (s) & RMSE & Time (s) & RMSE \\
    \midrule
    500     &   2.9 & 1.26 &   9.6 & 0.91 \\
    1{,}000 &   9.7 & 0.76 &  10.4 & 0.71 \\
    2{,}000 &  47.6 & 0.64 &  17.7 & 0.53 \\
    5{,}000 & \multicolumn{2}{c}{infeasible} &  41.2 & 0.40 \\
    10{,}000 & \multicolumn{2}{c}{infeasible} &  70.6 & 0.36 \\
    20{,}000 & \multicolumn{2}{c}{infeasible} & 152.5 & 0.29 \\
    \bottomrule
  \end{tabular}
\end{table}

\begin{figure}[H]
  \centering
  \includegraphics[width=\textwidth]{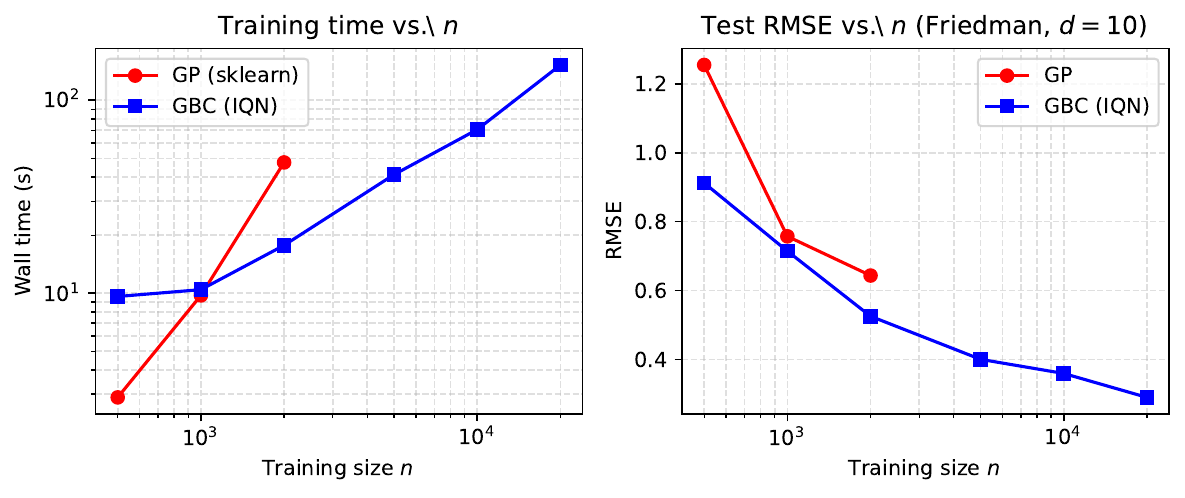}
  \caption{Friedman $d=10$: training time (left) and test RMSE (right) vs.\
    sample size. GP time grows as $\mathcal{O}(n^3)$; GBC grows approximately
    linearly. GBC achieves lower RMSE at all $n$ and remains feasible as $n$
    grows beyond GP's reach.}
  \label{fig:scaling}
\end{figure}

\section{Real-Data and Applied Examples}\label{sec:real}

This section evaluates GBC on five applied benchmarks from the computer experiments literature: semiconductor transport (AMHV), spatial jump processes (Phantom, Star, Michalewicz), and active learning for rocket aerodynamics and satellite drag.

\subsection{AMHV semiconductor transport}
\label{sec:amhv}

The AMHV (Automated Material Handling Vehicle) transport dataset \citep{kang2024bayesian} records average transport time (seconds) for autonomous vehicles in a semiconductor wafer fabrication facility ($N = 9{,}503$, $d=4$). The four inputs --- vehicle speed, acceleration, minimum inter-vehicle distance, and empty-vehicle search range --- control logistics parameters whose combined effect on transport time is smooth at low utilization but jumps sharply when the system approaches capacity saturation. This real-world dataset is the hardest of the four Flowers benchmarks: the two response regimes \emph{overlap} in the marginal distribution (unlike the well-separated modes of Phantom and Star), and $d{=}4$ makes the data sparse relative to input volume. Inputs and response are normalized to $[0,1]$ following \citet{flowers2026modular}. We use the same 90:10 train/test protocol (20 replicates).

On this smooth-boundary dataset, GBC without augmentation achieves RMSE~$= 0.024$ and CRPS~$= 0.011$ (SE ${<}\,0.001$ for both) with 91\% coverage. MJGP achieves CRPS~$= 0.012$, marginally worse, and its RMSE ($0.027$) is actually worse than GBC's ($0.024$) --- the jump-detection step introduces overhead when the boundary is smooth rather than sharp. This illustrates a recurring pattern: a general-purpose architecture avoids the performance penalty that specialized methods may incur when their structural assumptions are not well-matched to the data.

\subsection{Flowers jump datasets (Phantom and Star)}
\label{sec:flowers}

\citet{flowers2026modular} benchmark their MJGP on four datasets representing diverse piecewise structures. The Phantom dataset ($d{=}2$, $N{=}10{,}201$) simulates a bivariate response with a \emph{sinusoidal} jump boundary ($\sin(x_1) = x_2$) that creates nested curvilinear regions --- a shape that defeats axis-aligned partitions and simple classifiers. The Star dataset ($d{=}2$, $N{=}10{,}201$) replaces the smooth sine boundary with an angular, star-shaped partition in which every point in the interior regime is geographically close to the exterior regime, posing a challenge for neighborhood-based methods. Both datasets generate the response from two Gaussian components conditioned on regime membership, yielding a bimodal marginal distribution with a sharp level shift across the boundary. The difficulty is not in predicting the response within each regime (a simple GP suffices there) but in learning the complex boundary geometry: misclassifying even a thin strip of the input space near the boundary produces large errors in both the mean prediction and the predictive interval placement.

We compare GBC directly to MJGP using the same data, the same 90:10 random train/test protocol (20 replicates), and the \emph{full} training set for both methods.

\begin{table}[H]
  \centering
  \caption{Flowers et al.\ (2026) benchmark datasets: mean $\pm$ SE over 20 replicates,
    full training sets. MJGP uses the authors' code \citep{flowers2026modular}.
    GBC-Aug augments the IQN input with a boundary-classifier probability
    (Section~\ref{sec:gbc-aug}).
    \textbf{Bold} = best value across methods.}
  \label{tab:flowers}
  \resizebox{\textwidth}{!}{%
  \begin{tabular}{lrrrrrr}
    \toprule
    & \multicolumn{2}{c}{MJGP} & \multicolumn{2}{c}{GBC} & \multicolumn{2}{c}{GBC-Aug} \\
    \cmidrule(lr){2-3}\cmidrule(lr){4-5}\cmidrule(lr){6-7}
    Dataset ($n_{\rm train}$) & RMSE & CRPS & RMSE & CRPS & RMSE & CRPS \\
    \midrule
    Phantom ($n{=}9\text{k}$, $d{=}2$) & $0.053\pm 0.001$ & $0.010\pm 0.000$ & $0.095\pm 0.001$ & $0.031\pm 0.001$$^{\dagger}$ & $\mathbf{0.044\pm 0.002}$ & $\mathbf{0.009\pm 0.000}$$^{\dagger}$ \\
    Star ($n{=}9\text{k}$, $d{=}2$)    & $0.085\pm 0.001$ & $0.024\pm 0.001$ & $0.124\pm 0.001$ & $0.045\pm 0.001$$^{\dagger}$ & $\mathbf{0.059\pm 0.002}$ & $\mathbf{0.013\pm 0.000}$$^{\dagger}$ \\
    AMHV ($n{=}8.5\text{k}$, $d{=}4$)    & $0.027\pm 0.000$ & $0.012\pm 0.000$ & $\mathbf{0.024\pm 0.000}$ & $\mathbf{0.011\pm 0.000}$ & \multicolumn{2}{c}{---} \\
    Michalewicz ($n{=}90\text{k}$, $d{=}4$) & \multicolumn{2}{c}{infeasible\,$(\geq 48$\,h)} & $\mathbf{0.064\pm 0.001}$ & $\mathbf{0.031\pm 0.000}$$^{\dagger}$ & \multicolumn{2}{c}{---} \\
    \bottomrule
  \end{tabular}%
  }
  \smallskip\\
  {\footnotesize $^{\dagger}$Pinball-dominant loss ($w_1{=}0.10$,
    $w_2{=}0.20$, $w_3{=}0.70$); AMHV uses default loss.
    GBC 90\% coverage: Phantom 0.91, Star 0.89, AMHV 0.91, Michalewicz 0.91.
    GBC-Aug applied only to datasets with binary jump structure (Phantom, Star).
    MJGP metrics are our evaluations using the authors' code;
    \citet{flowers2026modular} report RMSPE only.
    ``$\pm 0.000$'' indicates SE $< 0.0005$.}
\end{table}

On Phantom ($d{=}2$), plain GBC with quantile-dominant loss achieves CRPS~$= 0.031$ --- approximately $3\times$ worse than MJGP's $0.010$. The $L_1$ location-anchor term encourages the shared hidden layers to predict the conditional median, which near a jump boundary is an average of the two regimes rather than a sharp transition. Reducing the anchor weight to $w_1{=}0.10$ and increasing the quantile weight to $w_3{=}0.70$ frees the network to learn sharper boundary representations.  A loss-weight ablation on Phantom (20 replicates, 6{,}000 epochs) confirms this: $(0.05, 0.25, 0.70)$ gives CRPS~$= 0.033$; $(0.00, 0.30, 0.70)$ gives $0.035$; and $(0.10, 0.20, 0.70)$ gives $0.031$ (Table~\ref{tab:flowers}). Removing the anchor entirely ($w_1 = 0$) degrades performance, suggesting that a small $L_1$ term stabilizes optimization without biasing the quantile estimates toward the boundary average. The largest gain --- $28\%$ on Phantom --- comes from this loss reweighting alone, with no architectural change; the gain decreases with boundary sharpness ($15\%$ on Star, $9\%$ on Michalewicz, $5\%$ on the smooth AMHV).

Applying GBC-Aug (Section~\ref{sec:gbc-aug}) closes the remaining gap entirely. With an MLP classifier achieving 99.98\% accuracy on Phantom, GBC-Aug achieves RMSE~$= 0.044 \pm 0.002$ (vs.\ MJGP's $0.053$) and CRPS~$= 0.009$ (SE ${<}\,0.001$) with 91\% coverage, matching MJGP on CRPS and improving RMSE by 17\%. On Star, GBC-Aug achieves CRPS~$= 0.013$ (SE ${<}\,0.001$; 89\% coverage), outperforming MJGP's $0.024$ by 46\%. Total training time (classifier + IQN) is $\approx 23$\,s per replicate on GPU (A100), compared to $> 600$\,s for MJGP on 16 CPU threads. This $26\times$ wall-clock ratio reflects both the $\mathcal{O}(n)$ vs.\ $\mathcal{O}(n^3)$ algorithmic difference and the GPU-vs-CPU hardware asymmetry.

A notable finding is that ensembling \emph{hurts} on jump data. On smooth data (motorcycle), pooling $K{=}5$ IQNs improves CRPS via variance reduction (Table~\ref{tab:moto}). On jump data, ensembling increases CRPS --- the opposite effect. The practical recommendation: for jump data, use a single model with quantile-dominant loss; for smooth data, ensemble for variance reduction. We discuss a candidate mechanism (boundary-location blur across members) in Section~\ref{sec:discussion}.

The Michalewicz benchmark \citep{surjanovic2013virtual} ($d{=}4$, $N{=}100{,}000$ via Latin hypercube; 90:10 split, $n_{\rm train}{=}90{,}000$) modifies the standard Michalewicz function by adding a $0.5$ offset to the flat region, introducing a jump manifold in four-dimensional input space. Unlike Phantom and Star, the marginal response is not bimodal: it is a point mass near $0.5$ plus a scattered distribution in $[-4, 0]$, violating the Gaussian-mixture clustering assumption. At $n_{\rm train} = 90{,}000$, MJGP is computationally infeasible within practical time limits ($\geq 48$\,h). GBC trains in under 30\,min on GPU, achieving RMSE~$= 0.064 \pm 0.001$ and CRPS~$= 0.031$ (SE ${<}\,0.001$) --- a regime accessible only to $\mathcal{O}(n)$ methods.

\subsection{Rocket LGBB active learning}
\label{sec:al-rocket}

The Rocket LGBB (Langley Glide-Back Booster) benchmark \citep{pamadi2004aerodynamic,gramacy2020surrogates,sauer2023active} models the aerodynamics of a reusable NASA rocket booster gliding back to Earth after launch. The three inputs are Mach number, angle of attack, and side-slip angle; the response is the side-force coefficient (one of six aerodynamic coefficients produced by the simulator). The sound barrier at Mach~$= 1$ imparts a sharp regime change on the response surface, creating a nonlinear ridge in the Mach--angle-of-attack plane that stationary GP kernels systematically oversmooth. This makes the LGBB a standard benchmark for non-stationary surrogates in the computer experiments literature \citep{gramacy2020surrogates}. Following \citet{sauer2023active} exactly, we use $n_0{=}50$ initial Latin hypercube points, acquire 250 additional points via the ensemble disagreement criterion, and evaluate on a 1{,}000-point test set after each acquisition (30 independent replicates).

\begin{table}[H]
  \centering
  \caption{Active learning benchmarks: GBC-AL vs.\ DGP+ALC.
    Metric $\pm$ SE over independent replicates.
    Ratio $>1$ favors GBC (bold).}
  \label{tab:al}
  \begin{tabular}{lrcccr}
    \toprule
    Dataset & $n$ & Metric & GBC-AL & DGP+ALC & Ratio \\
    \midrule
    2D Exponential  & 100 & RMSE  & $0.087\pm 0.003$ & $0.072^{\ast}$ & $0.83\times$ \\
    Rocket LGBB     & 300 & RMSE  & $\mathbf{0.0072\pm 0.0002}$ & $0.0213\pm 0.001^{\ast}$ & $\mathbf{2.95\times}$ \\
    Satellite GRACE & 500 & RMSPE & $1.19\pm 0.01$ & $0.967^{\ast}$ & $0.81\times$ \\
    SatMini ($d{=}7$) & 60 & RMSPE & $\mathbf{5.30\pm 0.15}$ & $7.91\pm 0.28$ & $\mathbf{1.49\times}$ \\
    7D Proxy        & 70  & RMSE  & $\mathbf{0.344\pm 0.012}$ & $0.558\pm 0.007$ & $\mathbf{1.62\times}$ \\
    \bottomrule
  \end{tabular}
  \smallskip\\
  {\footnotesize $^{\ast}$DGP values from the published results of \citet{sauer2023active}
    (mean only; SE not available from published output);
    SatMini and 7D Proxy DGP values from our DGP+ALC runs on the same data splits.
    Replicates: 30 (2D, Rocket GBC), 50 (Rocket DGP, from published code), 10 (Satellite, SatMini, 7D Proxy).
    Supplement~D provides details on 2D Exponential, SatMini, and 7D Proxy.
    Coverage is not reported for AL benchmarks because the evaluation
    metric is point-prediction RMSE/RMSPE, following \citet{sauer2023active}.}
\end{table}

Figure~\ref{fig:al-rocket} shows the learning curves. At the final budget, GBC-AL achieves $2.95\times$ lower RMSE than DGP+ALC ($0.0072$ vs.\ $0.0213$), with non-overlapping replicate distributions (every GBC run yields lower RMSE than every DGP run); the 90\% bands separate by $n \approx 150$. The Rocket response surface contains sharp nonlinearities in the side-force coefficient that are better captured by the IQN than by the DGP's smooth squared-exponential kernel.

\begin{figure}[H]
  \centering
  \includegraphics[width=0.8\textwidth]{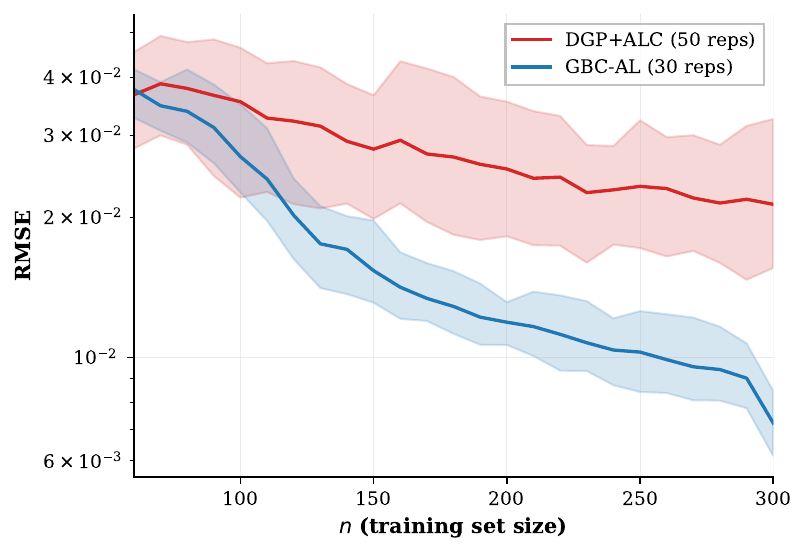}
  \caption{Rocket LGBB active learning: RMSE vs.\ training set size. Lines show
    the mean over 30 replicates (GBC-AL) and 50 replicates (DGP+ALC);
    shaded regions are 90\% bands (5th to 95th percentile).
    DGP+ALC replicate count (50) reflects the published experimental
    design of \citet{sauer2023active}; GBC-AL uses 30 replicates.
    The unequal counts do not alter the qualitative conclusion: the bands
    separate completely by $n \approx 150$, and every GBC replicate achieves lower RMSE
    than every DGP replicate at $n = 300$.}
  \label{fig:al-rocket}
\end{figure}

On the 2D Exponential benchmark ($d{=}2$, smooth, $n{=}100$), DGP achieves 21\% lower RMSE. This is consistent with the GP's inductive bias being well-matched to smooth, low-dimensional functions: the squared-exponential kernel's smoothness assumption provides effective regularization from very few points, while the neural network must estimate its implicit smoothness from limited data. Two supplementary proxy experiments (Supplement~D) corroborate these findings at smaller budgets: SatMini, a reduced-budget version of the Satellite GRACE problem ($n{=}60$), and 7D Proxy, a synthetic smooth function in $[0,1]^7$ designed to mimic the satellite drag surface at low cost. GBC-AL outperforms DGP+ALC on both ($+33\%$ and $+38\%$, Table~\ref{tab:al}).

\subsection{Satellite GRACE active learning}
\label{sec:al-sat}

The Satellite GRACE benchmark \citep{sun2019emulating,mehta2014modeling, sauer2023active} uses a Test Particle Monte Carlo (TPMC) simulator developed at Los Alamos National Laboratory to compute drag coefficients for satellites in low-Earth orbit --- a quantity needed for positioning and collision avoidance. The seven inputs describe the thermospheric conditions (atmospheric composition, velocity, temperature) encountered by the GRACE satellite. Unlike the Rocket LGBB, the response surface is smooth and the simulator is \emph{stochastic} ($\approx 86$\,s per evaluation), making this a high-dimensional, expensive-to-evaluate problem where the GP's smoothness prior is well-matched. \citet{mehta2014modeling} show that a GP trained on a static 1{,}000-point Latin hypercube achieves $< 1\%$ RMSPE; the active learning question is whether sequential design can reach this accuracy with fewer runs. Following \citet{sauer2023active}, we fix $n_0{=}110$ initial points (including 10 replicate pairs to estimate noise) and acquire 390 additional points (10 independent replicates).

Figure~\ref{fig:al-sat} reveals a nuanced comparison. GBC-AL leads throughout the early acquisition phase: at $n{=}200$, GBC achieves RMSPE~$= 1.89 \pm 0.02$ (mean $\pm$ SE over 10 replicates) versus DGP's $2.26$ (from published learning curve), a 16\% advantage. However, after approximately $n{=}300$ points, DGP overtakes GBC, ultimately reaching RMSPE~$= 0.967$ at $n{=}500$ compared to GBC's $1.19$. This crossover is consistent with the pattern observed across all benchmarks: GP surrogates benefit from an informative smoothness prior that the neural network lacks, and this advantage grows with $n$ on smooth surfaces (Section~\ref{sec:discussion}).

\begin{figure}[H]
  \centering
  \includegraphics[width=0.8\textwidth]{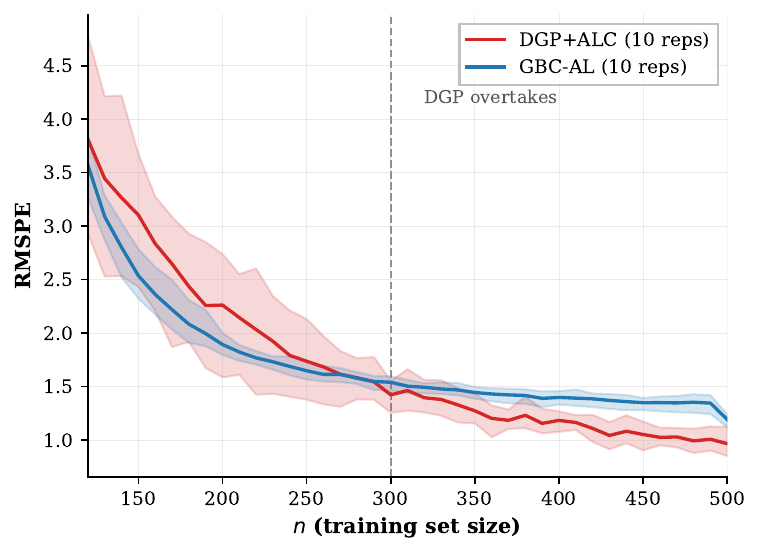}
  \caption{Satellite GRACE active learning: RMSPE vs.\ training set size.
    GBC-AL leads for $n < 300$; DGP+ALC overtakes after the crossover
    (dashed vertical line).
    For expensive simulators where the acquisition budget is limited,
    GBC provides better accuracy during the early phase.}
  \label{fig:al-sat}
\end{figure}

For expensive simulators where the acquisition budget is limited --- the most practically relevant regime --- GBC-AL delivers better accuracy during the early phase when each new point matters most. The SatMini and 7D Proxy benchmarks in Table~\ref{tab:al} confirm this small-$n$ advantage on independent datasets: GBC improves by 33\% ($n{=}60$) and 38\% ($n{=}70$), respectively.

These results suggest that GBC-AL tends to be more effective when the acquisition budget is moderate relative to the input dimension, while DGP+ALC benefits from its informative smoothness prior when enough data accumulates to identify the length-scale parameters of a smooth kernel.

Table~\ref{tab:summary} consolidates GBC's performance across the benchmarks studied. In the emulation comparisons (Table~\ref{tab:summary}, top), GBC matches or improves CRPS in every setting where a direct GP comparison is available, with the advantage growing with problem dimension and sample size.

\begin{table}[H]
  \centering
  \caption{Summary of GBC performance across all benchmarks. ``GP baseline''
    denotes the method-of-choice GP variant for each problem type.
    $\Delta$ metric is the percentage improvement of GBC over the GP baseline
    (positive = GBC better; CRPS for emulation, RMSE/RMSPE for AL);
    $k\times$ denotes a ratio when the improvement exceeds 100\%.
    \textbf{Bold} indicates GBC is better.}
  \label{tab:summary}
  \small
  \resizebox{\textwidth}{!}{%
  \begin{tabular}{llrrrl}
    \toprule
    Benchmark & GP Baseline & GP Metric & GBC Metric & $\Delta$ & Note \\
    \midrule
    \multicolumn{6}{l}{\textit{Emulation (CRPS, lower is better)}} \\
    BGP $d{=}2$ ($n{=}1.6$k) & Stationary GP & 1.379 & \textbf{1.224} & $\mathbf{+11\%}$ & 20/20 reps \\
    BGP $d{=}3$ ($n{=}1.6$k) & Stationary GP & 1.515 & \textbf{1.236} & $\mathbf{+18\%}$ & 20/20 reps \\
    BGP $d{=}4$ ($n{=}1.6$k) & Stationary GP & 1.716 & \textbf{1.267} & $\mathbf{+26\%}$ & 20/20 reps \\
    Motorcycle ($n{=}133$) & hetGP & 12.56 & \textbf{12.49} & $\mathbf{+0.5\%}$$^{\S}$ & 50 reps \\
    Friedman $d{=}10$ ($n{=}2$k) & Stationary GP & 0.378 & \textbf{0.323} & $\mathbf{+14\%}$ & 10/10 reps \\
    Friedman ($n{=}20$k) & GP infeasible & --- & \textbf{0.29} & --- & scales to $n{=}90$k \\
    Michalewicz ($n{=}90$k) & MJGP infeasible & --- & \textbf{0.031} & --- & GBC only \\
    Phantom $d{=}2$ ($n{=}9$k) & MJGP & 0.010 & \textbf{0.009}$^{\ddagger}$ & $\approx$\,$\mathbf{+10\%}$ & GBC-Aug; marginal \\
    Star $d{=}2$ ($n{=}9$k) & MJGP & 0.024 & \textbf{0.013}$^{\ddagger}$ & $\mathbf{+46\%}$ & GBC-Aug \\
    AMHV $d{=}4$ ($n{=}8.5$k) & MJGP & 0.012 & \textbf{0.011} & $\mathbf{+8\%}$ & smooth \\
    \midrule
    \multicolumn{6}{l}{\textit{Active learning (RMSE/RMSPE, lower is better)}} \\
    AL 2D Exp ($n{=}100$) & DGP+ALC & \textbf{0.072} & 0.087 & $-21\%$ & smooth, small $n$ \\
    AL Rocket ($n{=}300$) & DGP+ALC & 0.0213 & \textbf{0.0072} & $\mathbf{2.95\times}$ & 30/30 reps \\
    AL Satellite ($n{<}300$) & DGP+ALC & 2.26$^{\dagger}$ & \textbf{1.89}$^{\dagger}$ & $\mathbf{+16\%}$ & GBC leads \\
    AL Satellite ($n{=}500$) & DGP+ALC & \textbf{0.967} & 1.19 & $-23\%$ & DGP leads \\
    AL SatMini ($n{=}60$) & DGP+ALC & 7.91 & \textbf{5.30} & $\mathbf{+33\%}$ & small $n$ \\
    AL 7D Proxy ($n{=}70$) & DGP+ALC & 0.558 & \textbf{0.344} & $\mathbf{+38\%}$ & small $n$ \\
    \bottomrule
  \end{tabular}%
  }
  \smallskip\\
  {\footnotesize\textit{Note:} $^{\ddagger}$GBC-Aug (boundary-augmented); see Section~\ref{sec:gbc-aug}.
    $^{\dagger}$At $n{=}200$; GBC leads for $n < 300$, DGP overtakes after.
    GBC records a favorable point estimate in 12 of 14 direct comparisons;
    2 additional rows are feasibility-only (GP infeasible).
    Comparisons are descriptive: protocols, replicate counts (10--50), and
    baseline sources (re-run vs.\ published) vary across benchmarks
    (see Table~\ref{tab:al} footnote).
    DGP is better on smooth surfaces at small $n$ (2D Exp) and large $n$ (Satellite).
    $^{\S}$Within standard-error overlap; not a significant difference.}
\end{table}

\section{Discussion}\label{sec:discussion}

The CRPS advantage grows with problem dimension (11\% to 26\% on the BGP benchmarks) and with deviation from stationarity. Against specialist methods, GBC-Aug matches or surpasses MJGP by adopting its boundary-detection pipeline with an IQN backend; GBC-AL outperforms DGP+ALC on Rocket ($2.95\times$) but loses on smooth surfaces at large~$n$ (Satellite GRACE). The two cases where GBC consistently underperforms --- 2D Exponential and Satellite at $n{=}500$ --- both involve smooth functions where the GP's squared-exponential kernel provides effective regularization that the neural network does not replicate from limited data.

Ensembling helps on smooth data but hurts on jump data (Section~\ref{sec:flowers}). A candidate mechanism is that ensemble members localize the discontinuity boundary at slightly different locations, and pooling their quantile samples blurs the conditionals near the boundary. We have not isolated this effect experimentally (e.g., via boundary-location ablation), so it remains a hypothesis.

GBC also has a natural connection to likelihood-free inference. Because the IQN is trained on simulated pairs $(\theta, y)$ drawn from the joint distribution, it functions as an amortized posterior sampler in the sense of simulation-based inference \citep{cranmer2020frontier,papamakarios2021normalizing}. Unlike ABC \citep{beaumont2002approximate}, which requires per-observation forward-model evaluations, GBC generates posterior samples via a single forward pass after training. Unlike normalizing flows, the IQN provides direct access to arbitrary quantiles without invertibility constraints. Supplement~C demonstrates this connection on a bimodal posterior inference task.

GBC training is $\mathcal{O}(N \cdot L \cdot w^2)$ where $L$ is the number of layers and $w$ the layer width; this is linear in $N$ via mini-batch SGD. At inference, cost is $\mathcal{O}(B \cdot L \cdot w^2)$ per new observation, independent of training-set size.  Because GBC and GP baselines run on different hardware (GPU vs.\ CPU), wall-clock comparisons reflect both algorithmic and hardware differences; Figure~\ref{fig:scaling} isolates the asymptotic growth rates.

IQN calibration varies across problem regimes. Coverage is near-nominal ($0.85$--$0.91$) on piecewise and heteroskedastic benchmarks (BGP, AMHV), under-nominal on motorcycle ($0.83$, small $n$), and over-conservative on Friedman ($> 0.99$). No existing theory predicts which regime applies; the answer depends on the interaction between signal-to-noise ratio, boundary sharpness, and sample size. The CRPS improvements we report mix calibration and sharpness; a PIT decomposition of these components remains an open question.

More broadly, neural network variance estimates --- whether from heteroskedastic NLL, ensemble spread, or last-layer Laplace approximations --- are systematically overconfident on out-of-sample data: the predicted variance is calibrated on training inputs but does not extrapolate to regions where the network has seen little data.  When guaranteed coverage is required, conformal prediction \citep{vovk2005algorithmic,romano2019conformalized} provides a distribution-free post-processing step that recalibrates any black-box predictor using held-out residuals.  The IQN's quantile-based intervals avoid the variance-estimation problem but, as shown above, still exhibit regime-dependent miscalibration.

Several caveats apply. The benchmarks do not cover all domains ($d > 10$, functional outputs, stochastic simulators). Some gains are within standard-error overlap (motorcycle $+0.5\%$, Phantom $0.009$ vs.\ $0.010$). The ``12 of 14'' count is descriptive, not a formal test; replicate counts vary (10--50) and some baselines lack standard errors. Wall-clock ratios reflect GPU-vs-CPU hardware differences in addition to algorithmic scaling; Figure~\ref{fig:scaling} isolates the latter. Two baseline classes are absent: sparse variational GPs \citep{titsias2009variational,hensman2013gaussian}, which address cubic cost but retain stationarity/Gaussian assumptions, and non-GP surrogates (e.g., gradient-boosted trees), which lack principled distributional output for CRPS comparison.

Several open problems remain: formal AL optimality theory for IQNs analogous to IMSE/ALC for GPs \citep{sacks1989design,gramacy2020surrogates}; adaptive architecture selection; extension to vector-valued outputs; and improved epistemic uncertainty for small-$n$ active learning, where GP posteriors provide a structural advantage.

For practitioners, the key guideline is that GBC's strengths and limitations are predictable from problem structure. GBC is the stronger choice when the response has discontinuities, when $d$ or $n$ is large enough that cubic GP cost becomes prohibitive, or when the predictive distribution is non-Gaussian. A well-tuned GP remains preferable on smooth, moderate-$n$ surfaces ($n < 5{,}000$, $d \leq 4$) where the kernel prior provides effective regularization. The two methods can also be combined: GBC-Aug already borrows the GP literature's EM clustering idea, and conformal calibration \citep{romano2019conformalized} can post-process GBC intervals when formal coverage guarantees are needed.

GBC via IQNs provides a practical surrogate framework that delivers full predictive distributions at $\mathcal{O}(n)$ training cost.  On the benchmarks studied, it matches or improves upon GP surrogates for non-stationary, high-dimensional, and large-$n$ problems, while GPs retain an advantage on smooth, moderate-$n$ surfaces where their kernel prior provides effective regularization.  GBC is not a universal replacement for GPs but a complementary tool whose strengths and weaknesses are predictable from problem structure: discontinuities, high dimensionality, and large training sets favor GBC; smooth functions at small $n$ favor GPs.


\ifanonymous\else

\fi



\bibliography{gbc-gp}
\bibliographystyle{plainnat}

\end{document}